
\documentclass{article}

\usepackage{microtype}
\usepackage{graphicx}
\usepackage{booktabs} %
\usepackage{dsfont}
\usepackage{caption}
\usepackage{subcaption}
\usepackage{lipsum}
\usepackage{adjustbox}
\usepackage{multirow}
\usepackage{makecell}
\usepackage{tablefootnote}

\usepackage{basic_commands}

\usepackage[accepted]{icml2024}

\definecolor{myred}{HTML}{F54254}
\definecolor{myblue}{HTML}{598BE7}
\definecolor{mydarkblue}{HTML}{385492}

\newcommand{\cutsectionup}{\vspace{-6pt}}
\newcommand{\cutsectiondown}{\vspace{-4pt}}
\newcommand{\cutsubsectionup}{\vspace{-5pt}}
\newcommand{\cutsubsectiondown}{\vspace{-4pt}}

\expandafter\def\expandafter\normalsize\expandafter{%
    \normalsize
    \setlength\abovedisplayskip{5pt}
    \setlength\belowdisplayskip{5pt}
    \setlength\abovedisplayshortskip{0pt}
    \setlength\belowdisplayshortskip{0pt}
}

\linepenalty=1000

\usepackage{amsmath}
\usepackage{amssymb}
\usepackage{mathtools}
\usepackage{amsthm}

\usepackage[capitalize,noabbrev]{cleveref}

\hypersetup{urlcolor=myred,citecolor=myblue,linkcolor=myblue}
\newcommand{\hilpwebsite}{\url{https://seohong.me/projects/hilp/}}
\newcommand{\hilpcode}{\url{https://github.com/seohongpark/HILP}}
\newcommand{\hilpanonrepo}{\href{https://github.com/seohongpark/HILP}{this repository}\xspace}

\theoremstyle{plain}
\newtheorem{theorem}{Theorem}[section]

\newtheorem{corollary}[theorem]{Corollary}
\theoremstyle{definition}

\theoremstyle{remark}

\usepackage[textsize=tiny]{todonotes}

\icmltitlerunning{Foundation Policies with Hilbert Representations}

\begin{document}

\twocolumn[
\icmltitle{Foundation Policies with Hilbert Representations}

\icmlsetsymbol{equal}{*}

\begin{icmlauthorlist}
\icmlauthor{Seohong Park}{ucb}
\icmlauthor{Tobias Kreiman}{ucb}
\icmlauthor{Sergey Levine}{ucb}
\end{icmlauthorlist}

\icmlaffiliation{ucb}{University of California, Berkeley}

\icmlcorrespondingauthor{Seohong Park}{seohong@berkeley.edu}

\icmlkeywords{Machine Learning, ICML}

\vskip 0.3in
]

\printAffiliationsAndNotice{}  %

\cutsectionup
\begin{abstract}
\vspace{-2pt}
Unsupervised and self-supervised objectives, such as next token prediction, have enabled pre-training generalist models from large amounts of unlabeled data.
In reinforcement learning (RL), however, finding a truly general and scalable unsupervised pre-training objective for \emph{generalist policies} from offline data remains a major open question.
While a number of methods have been proposed to enable generic self-supervised RL, based on principles such as goal-conditioned RL, behavioral cloning, and unsupervised skill learning, such methods remain limited in terms of either the diversity of the discovered behaviors, the need for high-quality demonstration data, or the lack of a clear adaptation mechanism for downstream tasks.
In this work, we propose a novel unsupervised framework to pre-train generalist policies that capture diverse, optimal, long-horizon behaviors from unlabeled offline data such that they can be quickly adapted to any arbitrary new tasks in a zero-shot manner.
Our key insight is to learn a structured representation that preserves the temporal structure of the underlying environment, and then to span this learned latent space with directional movements, which enables various zero-shot policy ``prompting'' schemes for downstream tasks.
Through our experiments on simulated robotic locomotion and manipulation benchmarks, we show that our unsupervised policies can solve goal-conditioned and general RL tasks in a zero-shot fashion, even often outperforming prior methods designed specifically for each setting.
Our code and videos are available at \hilpwebsite.
\end{abstract}

\vspace{-22pt}
\section{Introduction}
\cutsectiondown

\begin{figure*}[t!]
    \centering
    \vspace{-3pt}
    \includegraphics[width=0.95\linewidth]{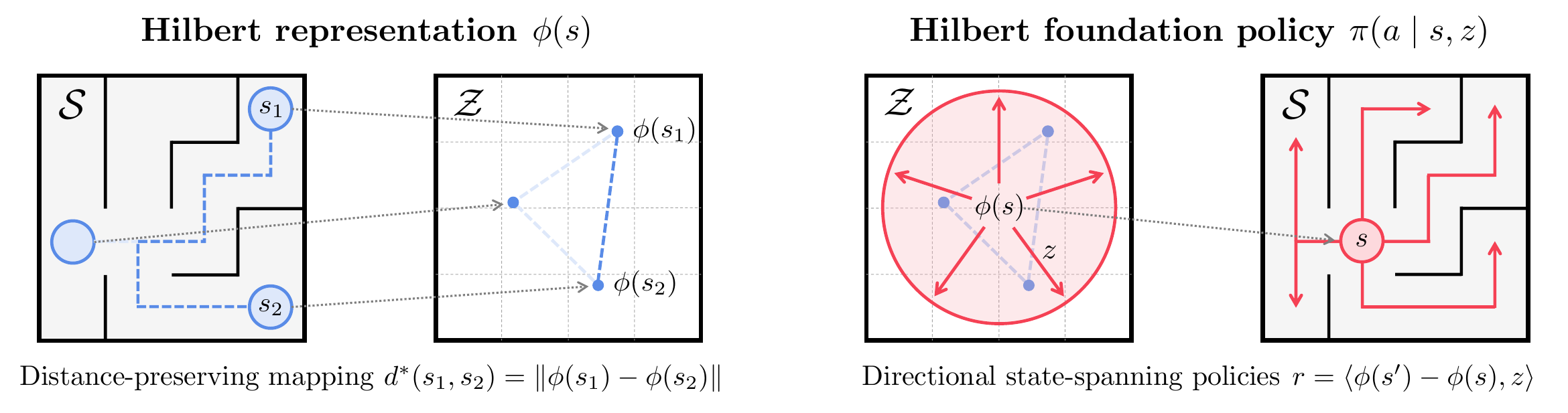}
    \vspace{-8pt}
    \caption{
    \footnotesize
    \textbf{Illustration of HILPs.}
    (\emph{left}) We first train a \emph{distance-preserving} mapping $\phi:\gS \to \gZ$
    that maps temporally similar states to spatially similar latent states ($d^*$ denotes the temporal distance).
    (\emph{right}) We then train a latent-conditioned policy $\pi(a \mid s, z)$, which we call a Hilbert foundation policy, that \emph{spans} that latent space with directional movements. %
    This policy captures diverse long-horizon behaviors from unlabeled data,
    which can be directly used to solve a variety of downstream tasks efficiently, even in a zero-shot manner.
    }
    \label{fig:illust}
    \vspace{-5pt}
\end{figure*}

Generalist models that can utilize large amounts of weakly labeled data provide an appealing recipe:
pre-train via self-supervised or unsupervised objectives on large and diverse datasets without ground truth labels,
and then adapt efficiently via prompting, few-shot learning, or fine-tuning to downstream tasks.
This strategy has proven to be extremely effective in settings where simple self-supervised objectives can be used to train on Internet-scale data~\citep{gpt3_brown2020,dalle2_ramesh2022},
leading to models that can quickly adapt to new tasks for pattern recognition~\citep{sam_kirillov2023},
question answering~\citep{instructgpt_ouyang2022},
and even diverse AI-assistant applications~\citep{codex_chen2021}.
Motivated by this observation,
a number of works have recently sought to propose self-supervised objectives to pre-train generalist \emph{policies}
for reinforcement learning (RL) and control~\citep{gato_reed2022,rtx_padalkar2024}.
We can broadly refer to the resulting models as \emph{foundation policies}:
general-purpose policies that can rapidly adapt to solve a variety of downstream tasks.

However, unlike natural language processing,
where next token prediction has become the standard pre-training objective~\citep{gpt3_brown2020},
finding the best \emph{policy pre-training objective} from data remains a major open question in RL.
Prior works have proposed several ways to pre-train generalist policies
based on diverse objectives, such as
behavioral cloning (BC)~\citep{opal_ajay2021,gato_reed2022,rtx_padalkar2024},
offline goal-conditioned RL (GCRL)~\citep{am_chebotar2021,contrastive_eysenbach2022,hiql_park2023},
and unsupervised skill discovery~\citep{vic_gregor2016,eigen_machado2017,diayn_eysenbach2019,metra_park2024}.
However, none of these objectives is ideal: behavioral cloning requires expert demonstrations,
which limits the availability of data,
goal-conditioned RL can only yield goal-reaching behaviors, %
and unsupervised skill discovery methods, though general and principled,
can present major challenges in terms of scalability, optimization, and offline learning.

In this work, we propose a general offline pre-training objective for foundation policies
that capture diverse, optimal ``long-horizon'' behaviors from unlabeled data
to facilitate downstream task learning.
Our main idea is to discover the \emph{temporal structure} of states through offline data,
and to represent this structure in such a way that
we can quickly and accurately obtain optimal policies
for any arbitrary new tasks from relatively concise ``prompts''
(\eg, a small number of states annotated with rewards, target goals, etc.).
We begin by learning a
geometric abstraction of the dataset,
where distances between representations of states
correspond to their long-horizon global relationships.
Specifically, we train a representation $\phi: \gS \to \gZ$
that maps the state space $\gS$ into a \emph{Hilbert} space $\gZ$
(\ie, a metric space with a well-defined \emph{inner product})
such that
\begin{align}
d^*(s, g) = \|\phi(s) - \phi(g)\|
\end{align}
holds for every $s, g \in \gS$,
where $d^*$ denotes the temporal distance (\ie, the minimum number of time steps needed for an optimal policy to transition between them).
Then, we train a latent-conditioned policy $\pi(a \mid s, z)$ that \emph{spans}
the learned latent space using offline RL,
with the following ``directional'' intrinsic reward based on the inner product:
\begin{align}
r(s, z, s') = \langle \phi(s') - \phi(s), z \rangle.
\end{align}
Intuitively, by learning to move in every possible \emph{direction} specified by a unit vector $z \in \gZ$,
the policy learns diverse long-horizon behaviors
that optimally \emph{span} the latent space as well as the state space (\Cref{fig:illust}).
The resulting multi-task policy $\pi(a \mid s, z)$ has a number of attractive properties.
\textbf{First}, it captures a variety of diverse behaviors, or \emph{skills}, from offline data.
These behaviors can be hierarchically combined or fine-tuned to solve downstream tasks efficiently.
\textbf{Second}, we can train this policy with offline RL (as opposed to BC), and thus can utilize suboptimal data,
unlike BC-based policy pre-training methods.
Moreover, the learned behaviors are provably optimal for solving goal-reaching tasks (under some assumptions),
which makes our method subsume \emph{goal-conditioned RL} as a special case,
while providing for much more diverse behaviors.
\textbf{Third}, thanks to our inner product parameterization,
this multi-task policy provides a very efficient way to adapt to any arbitrary reward function,
enabling \emph{zero-shot RL}. %
\textbf{Fourth},
this pre-training procedure yields a highly structured Hilbert representation $\phi$,
which enables efficient test-time \emph{planning} without training an additional model. %
Given the above versatility of our multi-task policy $\pi(a \mid s, z)$,
we call it a \textbf{Hilbert foundation policy} (\textbf{HILP}).

Our main contribution of this work is to introduce HILPs,
a new objective to pre-train diverse policies from offline data
that can be adapted efficiently to various downstream tasks.
Through our experiments,
we empirically demonstrate that HILPs capture diverse behaviors
that can be directly used to solve goal-conditioned RL and zero-shot RL without any additional training.
We also show that our single general HILP framework often outperforms previous offline policy pre-training methods
specifically designed for individual problem statements (\eg, zero-shot RL, goal-conditioned RL, and hierarchical RL)
on seven robotic locomotion and manipulation environments.

\cutsectionup
\section{Related Work}
\cutsectiondown

\textbf{Representation learning for sequential decision making.}
HILPs are based on a distance-preserving state representation $\phi$,
and are related to prior work in representation learning for RL and control.
Previous methods have proposed various representation learning objectives
based on visual feature learning~\citep{rrl_shah2021,pvr_parisi2022,mvp_xiao2022},
contrastive learning~\citep{tcn_sermanet2018,r3m_nair2022},
dynamics modeling~\citep{apv_seo2022,acstate_lamb2022,idm_brandfonbrener2023},
and goal-conditioned RL~\citep{vip_ma2023,icvf_ghosh2023}.
In particular, several previous methods~\citep{tcn_sermanet2018,r3m_nair2022,vip_ma2023}
employ the same $\ell^2$ parameterization as HILPs
to obtain temporal distance-based representations.
However,
unlike these prior works, which focus only on pre-training representations,
our focus is on unsupervised pre-training of diverse \emph{behaviors} (\ie, foundation \emph{policies}).
This enables solving downstream tasks in a \emph{zero-shot} manner
by simply ``prompting'' the foundation policy.

\textbf{Unsupervised policy pre-training.}
Prior works have proposed various unsupervised (\ie, task-agnostic) objectives
to pre-train diverse policies that can be used to accelerate downstream task learning.
Online unsupervised RL methods pre-train policies with
exploration~\citep{icm_pathak2017,disag_pathak2019,lexa_mendonca2021,murlb_rajeswar2023}
or skill discovery objectives~\citep{vic_gregor2016,diayn_eysenbach2019,dads_sharma2020,dceo_klissarov2023,metra_park2024}.
Unlike these works, we focus on the offline setting,
where we aim to learn diverse policies purely from an offline dataset of unlabeled trajectories.

For offline policy pre-training,
behavioral cloning~\citep{spirl_pertsch2020,opal_ajay2021,rtx_padalkar2024}
and trajectory modeling~\citep{dt_chen2021,tt_janner2021,gato_reed2022,maskdp_liu2022,mtm_wu2023}
approaches train foundation policies via supervised learning.
However, these supervised learning-based methods share a limitation
in that they assume demonstrations of high quality.
Among offline RL-based policy pre-training approaches, offline goal-conditioned RL methods train
goal-conditioned policies to reach any goal state from any other state~\citep{contrastive_eysenbach2022,gofar_ma2022,goat_yang2023,quasi_wang2023,hiql_park2023}.
These methods, however, only learn
goal-reaching behaviors and thus have limited behavioral diversity.
In contrast, our method subsumes goal-conditioned RL as a special case
while learning much more diverse behaviors,
which can be used to maximize arbitrary reward functions in a zero-shot manner.

Another line of work pre-trains multi-task policies with offline RL
based on successor features and other generalized value function designs~\citep{sr_dayan1993,sf_barreto2017,usfa_borsa2019,usf_ma2020,fb_touati2021,zs_touati2023,ramp_chen2023,uber_hu2023}.
Our work is closely related to these approaches as our inner product reward function resembles the linear structure in the successor feature framework.
Any successor feature or generalized value function approach operating as an unsupervised pre-training method
needs to make a key decision about which tasks to learn,
since any finite task representation needs to trade off some tasks for others.
Some prior methods make this decision simply based on random reward functions or random features~\citep{random_zheng2021,pvn_farebrother2023,ramp_chen2023,uber_hu2023},
while the others employ 
hand-crafted state features~\citep{sf_barreto2017,usfa_borsa2019},
off-the-shelf representation learning (\eg, autoencoders),
or low-rank approximation of optimal successor representations~\citep{zs_touati2023},
to specify and prioritize which tasks to capture.
In this work, we prioritize \emph{long-term temporal structure},
training state representation $\phi$ to capture the temporal distances between states
by geometrically abstracting the state space.
In our experiments, we show that this leads to significantly better performance and scalability
than prior successor feature- or generalized value function-based offline unsupervised RL methods.

Finally, our method is closely related to METRA~\citep{metra_park2024},
a recently proposed online unsupervised skill discovery method.
METRA also learns to span a temporal distance-based abstraction of the state space
based on a similar directional objective with online rollouts.
However, METRA cannot be directly applied to the offline setting as it assumes on-policy rollouts to train the representation $\phi$.
Unlike METRA, we decouple representation learning and policy learning to enable \emph{offline} policy pre-training from unlabeled data.

\cutsectionup
\section{Preliminaries and Problem Setting}
\cutsectiondown
\label{sec:prelim}

\textbf{Markov decision process (MDP).}
An MDP $\gM$ is defined as a tuple $(\gS, \gA, r, \mu, p)$,
where $\gS$ is the state space, $\gA$ is the action space,
$r: \gS \to \sR$ is the reward function, $\mu: \Delta(\gS)$ is the initial state distribution,
and $p: \gS \times \gA \to \Delta(\gS)$ is the transition dynamics kernel.
In this work, we assume a deterministic MDP unless otherwise stated,
following prior works in offline RL
and representation learning~\citep{vip_ma2023,icvf_ghosh2023,quasi_wang2023}.

\textbf{Hilbert space.}
A Hilbert space $\gZ$ is a complete vector space
equipped with an inner product $\langle x, y \rangle$,
the induced norm $\| x \| = \sqrt{\langle x, x \rangle}$,
and the induced metric $d(x, y) = \|x - y\|$ for $x, y \in \gZ$. 
\textbf{Intuitively}, a Hilbert space can roughly be thought of as a ``stricter'' version of a metric space,
where there exists an \emph{inner product} that is consistent with the metric.
For example, a Euclidean space with the $\ell^1$- or $\ell^\infty$-norm
is a metric space but not a Hilbert space,
whereas a Euclidean space with the $\ell^2$-norm is a Hilbert space,
as $\|x\|_2 = \sqrt{x^\top x}$ for $x \in \sR^D$.
In our experiments, we will mainly employ Euclidean spaces (with the $\ell^2$-norm) as Hilbert spaces,
but the theorems in the paper can be applied to any arbitrary real Hilbert space.

\textbf{Problem setting.}
We assume that we are given unlabeled trajectory data $\gD$,
which consists of state-action trajectories $\tau = (s_0, a_0, s_1, \dots, s_T)$.
We do not make any assumptions about the quality of these unlabeled trajectories:
they can be optimal for some unknown tasks, suboptimal, completely random, or even a mixture of these.

Our goal is to pre-train a versatile latent-conditioned policy $\pi(a \mid s, z)$,
where $z \in \gZ$ denotes a latent vector (which we call a \emph{task} or a \emph{skill}),
purely from the unlabeled offline data $\gD$, without online interactions.
For the evaluation of the pre-trained policy,
we consider three evaluation settings.
(\textbf{1}) \textbf{Zero-shot RL}\footnote{We use the term ``zero-shot RL'' following \citet{zs_touati2023}.}:
Given a reward function $r(s)$,
we aim to find the best latent vector $z$ that maximizes the reward function,
without additional training.
(\textbf{2}) \textbf{Offline goal-conditioned RL}:
Given a target goal $g \in \gS$,
we aim to find the best latent vector $z$ of the policy $\pi(a \mid s, z)$
that leads to the goal as quickly as possible, without additional training.
The goal is specified at test time.
(\textbf{3}) \textbf{Hierarchical RL}:
Given a reward function $r(s)$,
we train a high-level policy $\pi^h(z \mid s)$ that sequentially combines pre-trained skills
to maximize the reward function using offline RL.
In all three settings, we only allow online interaction with the environment during the final evaluation,
and assume that the state space and environment dynamics remain the same at evaluation time.

\cutsectionup
\section{Hilbert Foundation Policies (HILPs)}
\cutsectiondown

We now introduce our offline pre-training scheme for foundation policies
that capture diverse long-horizon behaviors from unlabeled data.
Our main strategy is to first learn a geometric state abstraction that preserves the temporal structure of the MDP (\Cref{sec:rep}),
and then to span the abstracted latent space with skills that correspond to directional movements in this space (\Cref{sec:policy}).

\cutsubsectionup
\subsection{Hilbert Representations}
\cutsubsectiondown
\label{sec:rep}

We begin by training a representation function $\phi: \gS \to \gZ$
that abstracts the state space into a
latent space $\gZ$.
We have two desiderata for $\phi$.
First, $\phi$ should map \emph{temporally} similar states to \emph{spatially} similar latent states,
so that it can abstract the dataset states while preserving their long-horizon global relationships.
Second, $\phi$ should be well-structured such that
it provides a way to train a versatile multi-task policy $\pi(a \mid s, z)$
that can be easily ``prompted'' to solve a variety of downstream tasks.

Based on these desiderata,
we set $\gZ$ to be a Hilbert space,
which not only provides a proper \emph{metric} to quantify the similarity between latent states,
but also provides an \emph{inner product} that enables several principled ways to prompt the policy,
which we will describe in \Cref{sec:hilp_usage}.
In the latent space $\gZ$,
our desiderata for the representation function $\phi$ can be formalized as follows:
\begin{align}
d^*(s, g) = \|\phi(s) - \phi(g)\|, \label{eq:temp_obj}
\end{align}
where $d^*$ denotes the optimal temporal distance from $s$ to $g$,
\ie, the minimum number of time steps to reach $g$ from $s$.
We refer to a representation $\phi$ that satisfies \Cref{eq:temp_obj} as a \textbf{Hilbert representation}.
Intuitively, $\phi$ is a distance-preserving embedding function (\ie, an \emph{isometry} to a Hilbert space),
where distances in the latent space correspond to the temporal distances in the original MDP.
This enables $\phi$ to abstract the state space while maintaining the global relationships between states.

To train $\phi$, we leverage the equivalence between temporal distances and optimal goal-conditioned value functions~\citep{gcrl_kaelbling1993,quasi_wang2023},
$V^*(s, g) = -d^*(s, g)$.
Here, $V^*(s, g)$ is the optimal goal-conditioned value function for the state $s$ and the goal $g$,
\ie, the maximum possible return (\ie, the sum of rewards) for the reward function given by
$r(s, g) = -\mathds{1}(s \neq g)$
and the episode termination condition given by $\mathds{1}(s=g)$.
Based on this connection to goal-conditioned RL,
we can train $\phi$ with any off-the-shelf offline goal-conditioned value learning algorithm~\citep{hiql_park2023,vip_ma2023,quasi_wang2023}
with the value function being parameterized as
\begin{align}
V(s, g) = -\|\phi(s) - \phi(g)\|. \label{eq:gcvf}
\end{align}

\textbf{Implementation.}
For practical implementation,
we set $\gZ$ to be the Euclidean space $\sR^D$ with the $\ell^2$-norm.
To train $V(s, g)$ from offline data,
we opt to employ the IQL-based~\citep{iql_kostrikov2022}
goal-conditioned value learning scheme introduced by \citet{hiql_park2023}.
This method minimizes the following temporal difference loss:
\begin{align}
\E_{s, s', g}[\ell^2_\tau (-\mathds{1}(s \neq g) + \gamma \bar V(s', g) - V(s, g))], \label{eq:obj_gcvf}
\end{align}
where $\gamma$ denotes a discount factor,
$\bar V$ denotes the target network~\citep{dqn_mnih2013},
and $\ell_\tau^2(x) = |\tau - \mathds{1}(x < 0)| x^2$ denotes the expectile loss~\citep{exp_newey1987},
an asymmetric $\ell^2$ loss that approximates the $\max$ operator in the Bellman backup~\citep{iql_kostrikov2022}.
States, next states, and goals (\ie, $(s, s', g)$) are sampled from the replay buffer with a hindsight relabeling strategy~\citep{her_andrychowicz2017,hiql_park2023},
and episodes terminate upon goal reaching
(see \Cref{sec:exp_detail} for details).
With the value function parameterization in \Cref{eq:gcvf},
our final objective for Hilbert representations $\phi$ becomes
\begin{align}
\E[\ell^2_\tau (-\mathds{1}(s \neq g) - \gamma \|\bar \phi(s') - \bar \phi(g)\| + \|\phi(s) - \phi(g)\|)], \label{eq:obj_phi}
\end{align} \\[-19pt]
where $\bar \phi$ denotes the target representation network.

\textbf{Remarks.}
There exist three potential limitations with \Cref{eq:obj_phi}.
First, the distance metric in $\gZ$ is symmetric,
whereas temporal distances might be asymmetric.
Second, even when the environment dynamics are symmetric,
there might not exist an exact isometry between the MDP and the Hilbert space~\citep{dist_indyk2017,dist_pitis2020}.
Third, we use a discount factor $\gamma$ in \Cref{eq:obj_phi}, but temporal distances are undiscounted.
In this regard, our objective might better be viewed as
finding the best discounted Hilbert \emph{approximation} of the MDP,
rather than learning an exact Hilbert abstraction.
While this might not be ideal in highly asymmetric environments,
we note that we will \emph{not} directly use the potentially erroneous parameterized value function $V(s, g)$ for policy learning.
We will instead only take the representation $\phi$,
defining a new reward function as well as a new value function
to pre-train an unsupervised latent-conditioned policy, as we will describe in \Cref{sec:policy}.
After all, our goal is to train a representation $\phi$ that captures the long-term temporal structure of the MDP,
and we empirically found that even approximate Hilbert representations lead to diverse useful behaviors that can be directly used to solve downstream tasks in our experiments (\Cref{sec:exp}). %
We refer to \Cref{sec:theory} for further discussions.

We also note that the parameterization in \Cref{eq:gcvf} has been employed in several prior works
in robotic representation learning~\citep{tcn_sermanet2018,r3m_nair2022,vip_ma2023}.
However, unlike these works, which mainly use $\phi$ only as a visual feature extractor,
we pre-train a foundation \emph{policy} with an intrinsic reward function based on an inner product involving $\phi$.

\begin{figure}[t!]
    \centering
    \includegraphics[width=0.8\linewidth]{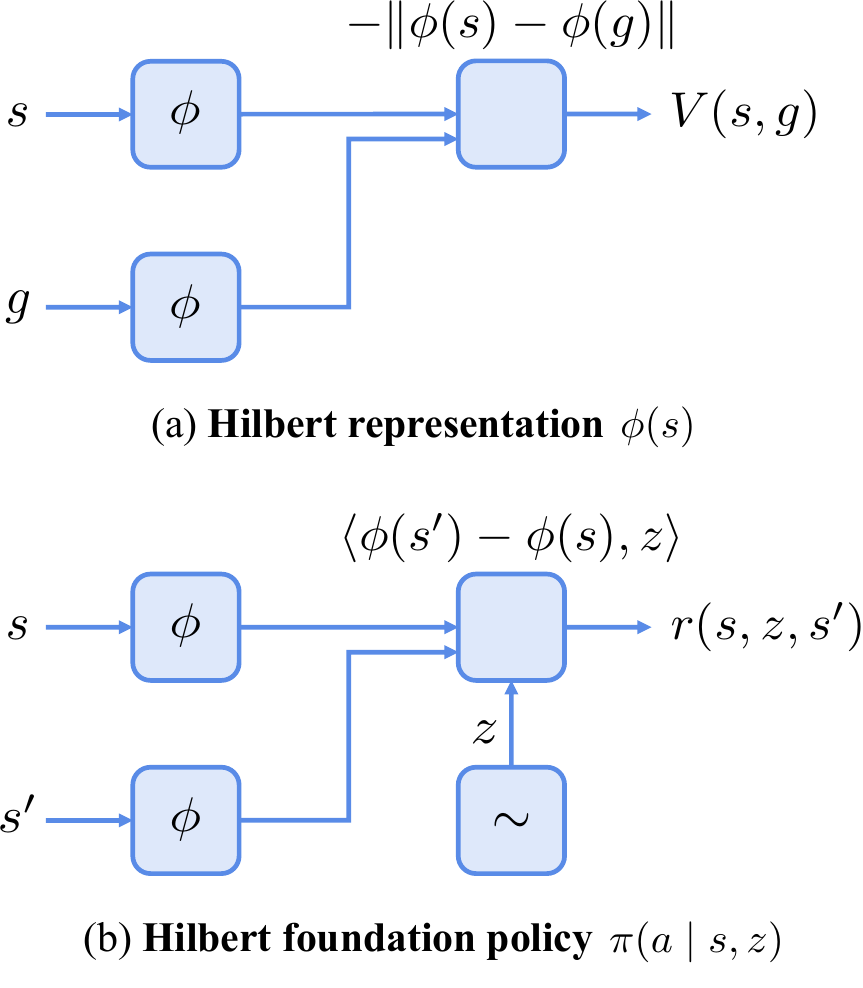}
    \vspace{-10pt}
    \caption{
    \footnotesize
    \textbf{Diagram of HILPs.}
    (\emph{a}) We train a Hilbert representation $\phi(s)$ using a goal-conditioned value learning objective
    with the value function parameterized as $V(s, g) = -\|\phi(s) - \phi(g)\|$.
    (\emph{b}) We train a Hilbert foundation policy $\pi(a \mid s, z)$
    using the intrinsic reward function $r(s, z, s')$
    defined as the inner product between $\phi(s') - \phi(s)$ and a randomly sampled unit vector $z$.
    }
    \label{fig:arch}
\end{figure}

\cutsubsectionup
\subsection{Unsupervised Policy Training}
\cutsubsectiondown
\label{sec:policy}

After obtaining a Hilbert representation $\phi$,
our next step is to learn diverse skills that \emph{span} the latent space $\gZ$ with offline RL.
Since a Hilbert space provides an inner product,
we can train a state-spanning
latent-conditioned policy $\pi(a \mid s, z)$ with the following inner product-based reward function:
\begin{align}
r(s, z, s') = \langle \phi(s') - \phi(s), z \rangle, \label{eq:hilp_reward}
\end{align}
where $z$ is sampled uniformly from the set of unit vectors in $\gZ$, $\{z \in \gZ: \|z\|=1\}$.

Since the latent-conditioned policy must maximize the reward in \Cref{eq:hilp_reward}
for all randomly sampled unit vectors $z$,
the optimal set of skills represented by $\pi(a \mid s, z)$
should be able to travel as far as possible in every possible latent space direction.
Consequently, we obtain a set of policies that optimally span the latent space,
capturing diverse behaviors from the unlabeled dataset $\gD$.
We call the resulting policy $\pi(a \mid s, z)$ a \textbf{Hilbert foundation policy} (\textbf{HILP}).
In \Cref{sec:hilp_usage}, we will discuss why HILPs are useful for solving downstream tasks in a variety of scenarios.

\textbf{Implementation.}
To train a HILP, we employ a standard off-the-shelf offline RL algorithm,
such as IQL~\citep{iql_kostrikov2022},
with the intrinsic reward defined as \Cref{eq:hilp_reward}.
Latent vectors $z$ are sampled from the uniform distribution over $\sS^{D-1} = \{z \in \sR^D: \|z\| = 1\}$.
In practice, we also consider another variant of the reward function defined as
$r(s, z, s') = \langle \phi(s) - \bar \phi, z \rangle$,
where $\bar \phi$ is defined as $\E_{s \sim \gD}[\phi(s)]$.
This is also a state-spanning directional reward function,
but the displacement is defined as the difference from the center
instead of the difference between two adjacent states.
We found this variant to perform better in the zero-shot RL setting in our experiments (\Cref{sec:exp_zs_rl}).
We illustrate the architecture of HILPs in \Cref{fig:arch},
summarize the full training procedure in \Cref{alg:algo},
and refer to \Cref{sec:exp_detail} for the full experimental details.

\cutsectionup
\section{Why are HILPs useful?}
\cutsectiondown
\label{sec:hilp_usage}

HILPs, in combination with structured representations $\phi(s)$,
provide a number of ways to solve downstream tasks efficiently, often even in a zero-shot fashion.
In this section, we describe three test-time ``prompting'' strategies for HILPs
for zero-shot RL (\Cref{sec:zs_rl}), offline goal-conditioned RL (\Cref{sec:zs_gcrl}),
and test-time planning (\Cref{sec:planning}).

\cutsubsectionup
\subsection{Zero-Shot RL}
\cutsubsectiondown
\label{sec:zs_rl}

First, HILPs can be used to quickly adapt to any arbitrary reward functions at test time.
Our key observation is that
the operand in our inner product reward function (\Cref{eq:hilp_reward})
$\tilde \phi(s, a, s') := \phi(s') - \phi(s)$
can be viewed as a \emph{cumulant}~\citep{rl_sutton2005}
in the successor feature framework~\citep{sr_dayan1993,sf_barreto2017}.
This connection to successor features enables \emph{zero-shot RL}~\citep{zs_touati2023}:
given an arbitrary reward function $r(s, a, s')$ at test time,
we can find the latent vector $z$ for the policy $\pi(a \mid s, z)$
that best solves the task via simple linear regression,
without any additional training.
Specifically, the optimal (unnormalized) latent vector $z^*$ for the reward function $r(s, a, s')$ is given as
\begin{align}
z^* = \argmin_{z \in \gZ} \E_\gD \left[\left(r(s, a, s') - \langle \tilde \phi(s, a, s'), z \rangle \right)^2\right],
\end{align}
where $(s, a, s')$ tuples are sampled from the unlabeled dataset $\gD$.
If $\gZ$ is a Euclidean space, we have the closed-form solution
\mbox{$z^* = \E_\gD [ \tilde \phi \tilde \phi^\top ]^{-1}\E_\gD [ r(s, a, s') \tilde \phi ]$},
where $\tilde \phi$ denotes $\tilde \phi(s, a, s')$.

\begin{algorithm}[t!]
    \small
    \caption{Hilbert Foundation Policies (HILPs)}
    \begin{algorithmic}[1]
        \begingroup
        \STATE Initialize Hilbert representation $\phi(s)$, policy $\pi(a \mid s, z)$
        \WHILE{not converged}
            \STATE Sample $(s, s', g) \sim \gD$
            \STATE Train $\phi(s)$ by minimizing \Cref{eq:obj_phi}
        \ENDWHILE
        \WHILE{not converged}
            \STATE Sample $(s, a, s') \sim \gD$
            \STATE Sample $z \sim \sS^{D-1}$
            \STATE Compute intrinsic reward $r(s, z, s')$
            \STATE Train $\pi(a \mid s, z)$ with any off-the-shelf offline RL algorithm
        \ENDWHILE
        \endgroup
    \end{algorithmic}
    \label{alg:algo}
\end{algorithm}

In practice,
we sample a small number of $(s, a, s')$ tuples from the dataset,
compute the optimal latent $z^*$ with respect to the test-time reward function using the above formula,
and execute the corresponding policy $\pi(a \mid s, z^*)$ to perform zero-shot RL.

\cutsubsectionup
\subsection{Offline Goal-Conditioned RL}
\cutsubsectiondown
\label{sec:zs_gcrl}

For goal-reaching tasks,
HILPs provide an even simpler way to solve \emph{goal-conditioned RL} in a zero-shot manner,
where the aim is to reach a goal state $g \in \gS$ within the minimum number of steps.
Intuitively, a Hilbert representation $\phi$ is learned in a way that the distance between $\phi(s)$ and $\phi(g)$ in the latent space
corresponds to the optimal temporal distance $d^*(s, g)$.
Hence, to reach the goal $g$, the agent just needs to move in the latent direction of $\phi(g) - \phi(s)$
so that it can monotonically decrease the distance toward $\phi(g)$ in the Hilbert space,
which in turn decreases the temporal distances toward $g$ in the original MDP.
Since the HILP $\pi(a \mid s, z)$ is pre-trained to move in every possible direction,
we can just set $z$ to
\begin{align}
    z^* := \frac{\phi(g) - \phi(s)}{\|\phi(g) - \phi(s)\|}. \label{eq:gc_z}
\end{align}

\begin{figure}[t!]
    \centering
    \vspace{-3pt}
    \includegraphics[width=0.8\linewidth]{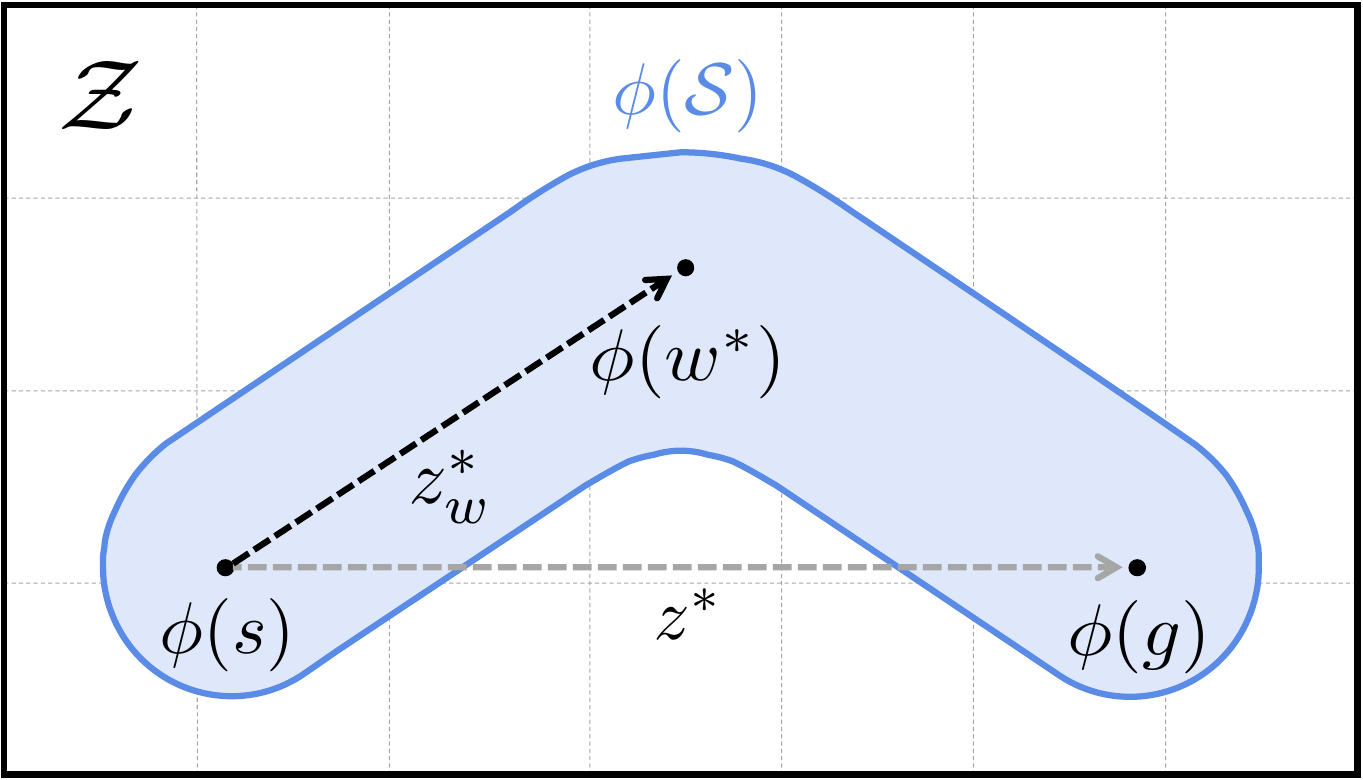}
    \vspace{-5pt}
    \caption{
    \footnotesize
    \textbf{Test-time midpoint planning.}
    In the presence of embedding errors,
    the direction toward the midpoint subgoal $w^*$ can be more accurate than the direction toward the goal $g$.
    }
    \label{fig:plan_illust}
    \vspace{-8pt}
\end{figure}

We theoretically prove this intuition as follows:
\begin{theorem}[Directional movements in the latent space are optimal for goal reaching]
\label{thm:gcrl}
If embedding errors are bounded as \mbox{$\sup_{s, g \in \gS} |d^*(s, g) - \|\phi(s) - \phi(g)\|| \leq \eps_e$},
directional movement errors are bounded as \mbox{$\sup_{s, g \in \gS} \|z'^*(s, g) - \hat z'(s, g)\| \leq \eps_d$},
and $\, 4\eps_e + \eps_d < 1$,
then $\pi(a \mid s, z^*)$ is an optimal goal-reaching policy.
\end{theorem}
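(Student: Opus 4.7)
The plan is to reduce the theorem to a one-step contraction: from any $s\ne g$, the next state $s'$ produced by an action drawn from $\pi(\cdot\mid s,z^*)$ satisfies $d^*(s',g)\le d^*(s,g)-1$. Since the MDP is deterministic, iterating this strict decrease yields a trajectory that reaches $g$ in exactly $d^*(s_0,g)$ steps, matching the optimum. The entire argument thus boils down to a single geometric one-step lemma that juggles the two error budgets $\eps_e$ and $\eps_d$.

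For that lemma, let $D=\|\phi(s)-\phi(g)\|$, $M=\|\phi(s')-\phi(s)\|$, and write $\phi(g)-\phi(s)=Dz^*$ and $\phi(s')-\phi(s)=M\hat z'$ for unit vectors $z^*$ and $\hat z'$. A one-line triangle-inequality step gives
\begin{align*}
\|\phi(g)-\phi(s')\| = \|Dz^* - M\hat z'\| \le |D-M| + M\|z^*-\hat z'\| \le |D-M| + M\eps_d,
\end{align*}
so the post-step latent distance depends only on $D$, $M$, and $\eps_d$. The embedding hypothesis yields $D\in[d^*(s,g)-\eps_e,\,d^*(s,g)+\eps_e]$, and since $\eps_d<1$ forces $s'\ne s$, the identity $d^*(s,s')=1$ gives $M\in[1-\eps_e,\,1+\eps_e]$. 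I then split on the sign of $D-M$. If $M\le D$, the inequality above becomes $d^*(s',g)\le D-M(1-\eps_d)+\eps_e\le d^*(s,g)-1+3\eps_e+\eps_d$, and the hypothesis $4\eps_e+\eps_d<1$ together with integrality of $d^*$ forces $d^*(s',g)\le d^*(s,g)-1$. If instead $M>D$, then $D<1+\eps_e$, so $d^*(s,g)\le 1+2\eps_e<2$, hence $d^*(s,g)=1$; the same inequality now yields $d^*(s',g)\le 3\eps_e+(1+\eps_e)\eps_d<1$, which as a nonnegative integer must equal $0$, so $s'=g$.

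The part I expect to be delicate is the overshoot case $M>D$: the raw triangle-inequality bound alone is uninformative there, and I need to exploit the embedding budget to collapse $d^*(s,g)$ down to $1$ and then use integrality of $d^*$ to promote a sub-$1$ bound to exact goal reaching. The constant $4\eps_e+\eps_d<1$ is precisely what makes both regimes close simultaneously, and it explains the factor of four on $\eps_e$ (two from $|D-M|\le 2\eps_e$, one from $\eps_e$ in converting $\|\phi(g)-\phi(s')\|$ back to $d^*(s',g)$, and one slack to beat the integer threshold). Once the one-step lemma is in hand, the outer induction on $d^*(s_0,g)$ is immediate.
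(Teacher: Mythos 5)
Your proof is correct and follows essentially the same route as the paper's: a one-step lemma showing $d^*(s',g)\le d^*(s,g)-1$ via a triangle-inequality decomposition through the ideal latent target, a two-case split handling the overshoot regime separately, and integrality of $d^*$ to promote the strict inequality, followed by induction on $d^*(s_0,g)$. The only discrepancy is definitional: the paper's $z'^*(s,g)$ and $\hat z'(s,g)$ are latent \emph{points} ($\phi(s)$ plus the unit goal direction, and $\phi(s')$ itself), so its $\eps_d$ bounds $\|z^*-M\hat z'\|$ rather than your $\|z^*-\hat z'\|$ between unit vectors; since the two readings differ by at most $|1-M|\le\eps_e$, your argument still closes under the paper's definitions with the same threshold $4\eps_e+\eps_d<1$ (your $M\le D$ case then yields slack $4\eps_e+\eps_d$ instead of $3\eps_e+\eps_d$).
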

The formal definitions of $z'^*(s, g)$ and $z'(s, g)$ and the proof are provided in \Cref{sec:theory}.
Intuitively, \Cref{thm:gcrl} states that if the embedding errors of $\phi$ are small enough,
directional movements in the latent space are optimal for solving goal-reaching tasks.
We refer to \Cref{sec:theory} for further discussion including limitations.

\cutsubsectionup
\subsection{Test-Time Planning}
\cutsubsectiondown
\label{sec:planning}

Another benefit of our HILP framework is that
it naturally enables efficient \emph{test-time planning} for goal-conditioned RL
based on the structured state representation $\phi(s)$.
While \Cref{sec:zs_gcrl} introduces a simple method to query the policy $\pi(a \mid s, z)$ to reach a goal $g$ (\Cref{eq:gc_z}),
this might not be perfect in practice due to potential embedding errors in $\phi$
and thus may potentially lead to suboptimal goal-reaching performance, as stated in \Cref{thm:gcrl}.
In particular, when the image of the mapping $\phi$ is distorted (\Cref{fig:plan_illust}),
the straight line from the current latent state $\phi(s)$ to the latent goal $\phi(g)$ in $\gZ$
might not represent a feasible path between $s$ and $g$ in the MDP, potentially making the agent struggle to reach the goal.

\begin{algorithm}[t!]
    \small
    \caption{Test-time planning with HILPs}
    \begin{algorithmic}[1]
        \STATE \textbf{Input}: unlabeled offline dataset $\gD$, Hilbert representation $\phi(s)$, HILP $\pi(a \mid s, z)$, goal $g$
        \STATE Sample $w_1, w_2, \dots, w_N \sim \gD$
        \STATE Pre-compute $\phi(w_1), \phi(w_2), \dots, \phi(w_N)$
        \STATE Observe $s_0 \sim \mu(s_0)$
        \FOR{$t \gets 0$ to $T-1$}
            \STATE $s, u \gets s_t, g$
            \FOR{$i \gets 1$ to (\# recursions)}
                \STATE $w^* \gets \argmin_{w \in \{w_1 \dots, w_N\}} \max(\|\phi(s) - \phi(w)\|,$ \newline
                    \hspace*{6pt} $\|\phi(w) - \phi(u)\|)$
                \STATE $u \gets w^*$
            \ENDFOR
            \STATE Compute $z_w^*$ with \Cref{eq:gc_w}
            \STATE Sample $a_t \sim \pi(a_t \mid s_t, z_w^*)$
            \STATE Observe $s_{t+1} \sim p(s_{t+1} \mid s_t, a_t)$
        \ENDFOR
    \end{algorithmic}
    \label{alg:plan}
\end{algorithm}

To resolve this issue,
we introduce a way to further \emph{refine} the latent vector $z$ to mitigate such approximation errors
by finding the optimal \emph{feasible subgoal} between $s$ and $g$.
Specifically, we aim to find a midpoint state that is equidistant from both $s$ and $g$ and still makes progress towards the goal.
This can be formalized as the following minimax objective~\citep{ris_chanesane2021}:
\begin{align}
w^* := &\argmin_{w \in \gS} \left[ \max(d^*(s, w), d^*(w, g)) \right] \label{eq:plan} \\
\approx &\argmin_{w \in \gS} \left[ \max(\|\phi(s) - \phi(w)\|, \|\phi(w) - \phi(g)\|) \right]. \nonumber
\end{align}

After finding $w^*$, we refine the latent vector $z$ for the policy $\pi(a \mid s, z)$ as follows:
\begin{align}
    z_w^* = \frac{\phi(w^*) - \phi(s)}{\|\phi(w^*) - \phi(s)\|}. \label{eq:gc_w}
\end{align}
As the distance between $s$ and $w^*$ is smaller than that between $s$ and $g$,
$z_w^*$ is likely more accurate than $z^*$ in \Cref{eq:gc_z}.
We illustrate this refinement procedure in \Cref{fig:plan_illust}.

\begin{figure*}[t!]
    \centering
    \includegraphics[width=1\linewidth]{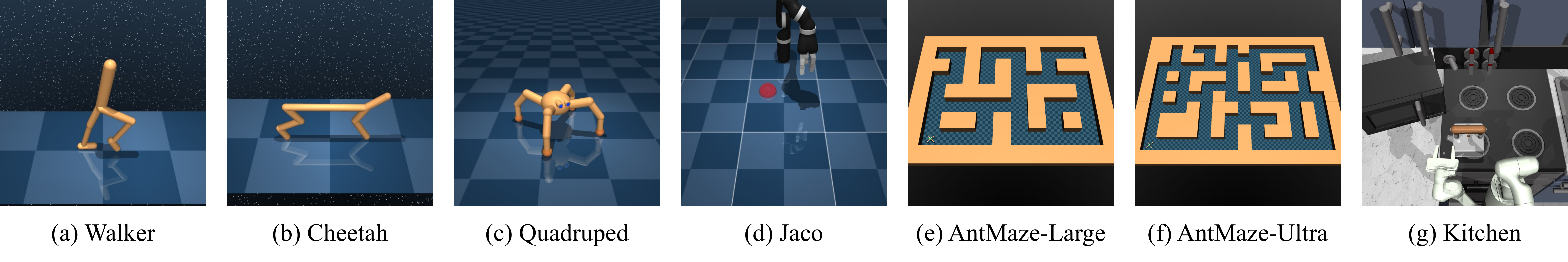}
    \vspace{-20pt}
    \caption{
    \footnotesize
    \textbf{Environments.}
    We evaluate HILPs on seven robotic locomotion and manipulation environments.
    }
    \vspace{-5pt}
    \label{fig:envs}
\end{figure*}

Thanks to our structured Hilbert representation $\phi$, \Cref{eq:plan} can be very efficiently computed in practice.
At test time, we first randomly sample a small number ($N$) of states from the dataset $\gD$,
and pre-compute their Hilbert representations.
Then, at every time step, we approximate $w^*$ by finding the $\argmin$ of \Cref{eq:plan} over the $N$ samples
using the pre-computed representations.
Since we can approximate $d^*$ by measuring the distances between representations,
this procedure does not require any additional neural network queries.

To further refine the subgoal, we can recursively apply this midpoint planning procedure more than once.
Namely,
we can first find the midpoint $w^*$ between $s$ and $g$ via \Cref{eq:plan},
and then find the midpoint between $s$ and $w^*$ (\ie, the $1/4$-point between $s$ and $g$) with the same minimax objective,
and so on.
Again, thanks to our Hilbert representation,
this recursive planning can also be done only with elementary algebraic operations,
without additionally querying neural networks.
In our experiments,
we empirically found that iterative refinements do improve performance on long-horizon tasks.
We summarize the full test-time planning procedure in \Cref{alg:plan} in \Cref{sec:exp_detail}.

\cutsectionup
\section{Experiments}
\cutsectiondown
\label{sec:exp}

In our experiments, we empirically evaluate the performance of HILPs
on various types of downstream tasks.
In particular, we consider three different experimental settings:
zero-shot RL, offline goal-conditioned RL, and hierarchical RL
(see \Cref{sec:prelim} for the problem statements),
in which we aim to answer the following questions:
(1) Can HILPs be prompted to solve goal-conditioned and reward-based tasks in a zero-shot manner?
(2) Can a single general HILP framework outperform prior state-of-the-art approaches specialized in each setting?
(3) Does test-time planning improve performance?
In our experiments, we use $8$ random seeds unless otherwise stated,
and report $95\%$ bootstrap confidence intervals in plots
and standard deviations in tables.
Our code is available at \hilpanonrepo.

\cutsubsectionup
\subsection{Zero-Shot RL}
\cutsubsectiondown
\label{sec:exp_zs_rl}

We first evaluate HILPs in the \emph{zero-shot RL} setting,
where the aim is to maximize an arbitrary reward function given at test time without additional training.
For benchmarks,
following \citet{zs_touati2023},
we use the Unsupervised RL Benchmark~\citep{urlb_laskin2021}
and ExORL datasets~\citep{exorl_yarats2022},
which consist of trajectories collected by various unsupervised RL agents.
We consider four environments (Walker, Cheetah, Quadruped, and Jaco) (\Cref{fig:envs})
and four datasets collected by APS~\citep{aps_liu2021}, APT~\citep{apt_liu2021},
Proto~\citep{protorl_yarats2021}, and RND~\citep{rnd_burda2019} in each environment.
These datasets correspond to the four most performant unsupervised RL algorithms in Figure 2 in the work by \citet{exorl_yarats2022}.
In addition to these original state-based environments,
we also employ their \emph{pixel-based} variants with $64 \times 64 \times 3$-sized observation spaces
to evaluate the scalability of the methods to complex observations.
With these environments and datasets,
we train HILPs and eight previous zero-shot RL methods in three different categories:
(1) forward-backward (FB) representations~\citep{zs_touati2023}, 
a state-of-the-art zero-shot RL method,
(2) successor features (SFs)
with six different feature learners (forward dynamics model (FDM), Laplacian (Lap)~\citep{lap_wu2019}, autoencoder (AE),
inverse dynamics model (IDM), random features, and contrastive learning (CL)),
which includes the best feature learner (Laplacian) in \citet{zs_touati2023},
and (3) goal-conditioned RL (GC-TD3).
We use TD3~\citep{td3_fujimoto2018} as the base offline RL algorithm to train these methods,
as it is known to perform best in ExORL~\citep{exorl_yarats2022}.
After finishing unsupervised policy training,
we evaluate these methods on four test tasks in each environment
(\eg, Flip, Run, Stand, and Walk for Walker)
with their own zero-shot adaption strategies.
For HILPs, we use the zero-shot prompting scheme introduced in \Cref{sec:zs_rl}.
In the Jaco domain, we additionally consider the goal-conditioned prompting scheme (\Cref{sec:zs_gcrl}) for HILPs (HILP-G),
since it is a goal-oriented environment
(however, we do \emph{not} use this variant for overall performance aggregation).
For FB and SF methods, we use the zero-shot schemes introduced by \citet{zs_touati2023} based on
reward-weighted expectation or linear regression.
For the goal-conditioned RL baseline (GC-TD3),
we find the state with the highest reward value from the offline dataset and use it as the goal.

\begin{figure}[t!]
    \centering
    \vspace{-4pt}
    \begin{subfigure}[t!]{0.88\linewidth}
        \centering
        \includegraphics[width=\textwidth]{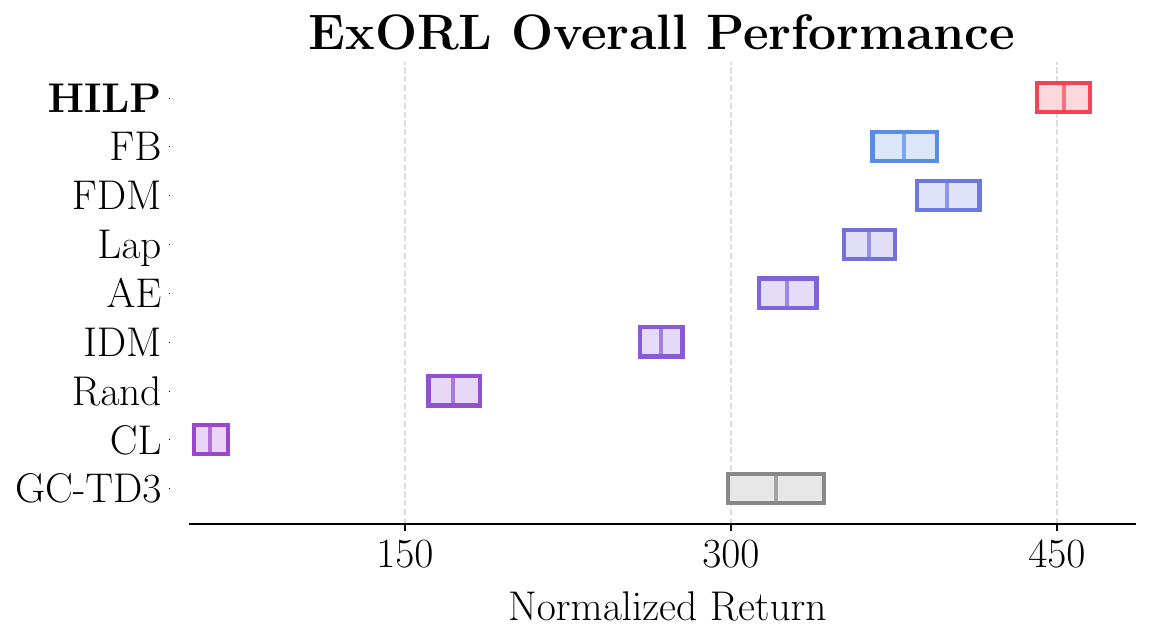}
    \end{subfigure}
    \begin{subfigure}[t!]{0.94\linewidth}
        \centering
        \includegraphics[width=\textwidth]{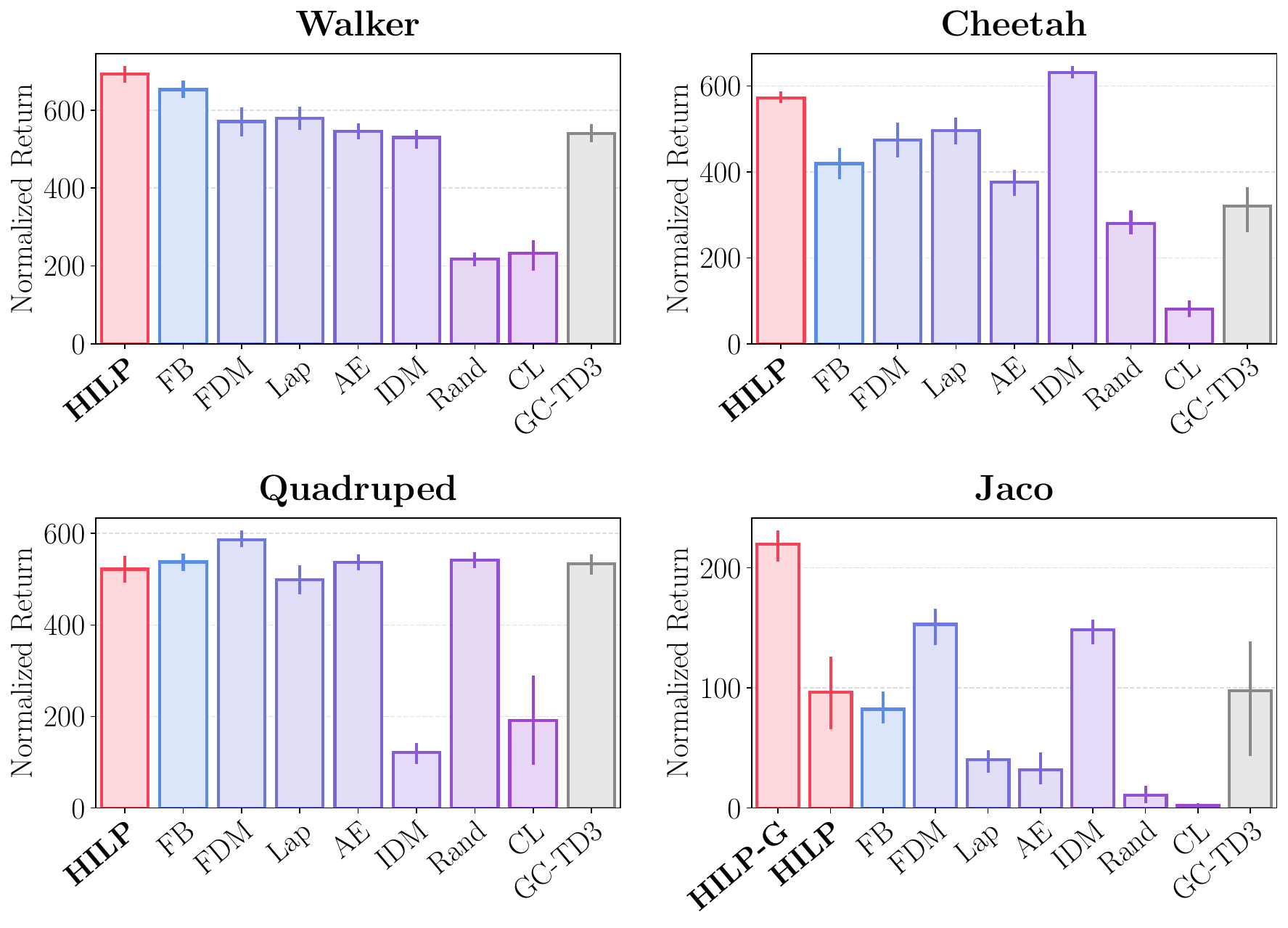}
    \end{subfigure}
    \vspace{-5pt}
    \caption{
    \textbf{Zero-shot RL performance.}
    HILP achieves the best zero-shot RL performance in the ExORL benchmark,
    outperforming previous state-of-the-art approaches.
    The overall results are aggregated over $4$ environments, $4$ tasks, $4$ datasets, and $4$ seeds (\ie, $256$ values in total).
    }
    \vspace{-13pt}
    \label{fig:zs}
\end{figure}

\Cref{fig:zs} shows the overall and per-environment comparison results,
where we use the interquartile mean (IQM) metric for overall aggregation,
following the suggestion by \citet{rliable_agarwal2021}.
The results suggest that HILPs achieve the best overall zero-shot RL performance among the nine methods,
while achieving the best or near-best scores in each environment.
Notably, HILPs achieve significantly better performance than GC-TD3,
which indicates that HILPs capture more diverse behaviors than ordinary goal-reaching policies,
and these diverse behaviors can be directly used to maximize a variety of reward functions at test time.
\Cref{fig:zs_pixel} shows the overall comparison results on pixel-based ExORL environments.
Due to high computational costs, we compare HILPs with the two most performant baselines in \Cref{fig:zs} (FB and FDM).
The results indicate that HILPs achieve the best IQM in pixel-based environments as well,
outperforming the previous best approaches.
We refer to \Cref{sec:add_results} for the full results.

\cutsubsectionup
\subsection{Offline Goal-Conditioned RL}
\cutsubsectiondown
\label{sec:exp_zs_gcrl}
Next, we evaluate HILPs on \emph{goal-reaching} tasks.
For benchmarks,
we consider the goal-conditioned variants of AntMaze and Kitchen tasks (\Cref{fig:envs})
from the D4RL suite~\citep{d4rl_fu2020,hiql_park2023}.
In the AntMaze environment,
we are given a dataset consisting of $1000$ trajectories of a quadrupedal Ant agent navigating through a maze.
We employ the two most challenging datasets with the largest maze (``antmaze-large-\{diverse, play\}'')
from the D4RL benchmark,
and two even larger settings (``antmaze-ultra-\{diverse, play\}'') introduced by \citet{tap_jiang2023}
to further challenge the long-horizon reasoning ability of the agent.
In the Kitchen manipulation environment~\citep{ril_gupta2019},
we are given a dataset consisting of trajectories of a robotic arm manipulating different kitchen objects
(\eg, a kettle, a microwave, cabinets, etc.) in various orders.
We employ two datasets (``kitchen-\{partial, mixed\}'') from the D4RL benchmark.
Additionally, we use \emph{pixel-based} variants of these datasets (``visual-kitchen-\{partial, mixed\}'')
to evaluate the visual reasoning capacity of the agent,
where the agent must learn to manipulate the robot arm purely from $64 \times 64 \times 3$ camera images.

\begin{figure}[t!]
    \centering
    \vspace{-4pt}
    \includegraphics[width=0.8\linewidth]{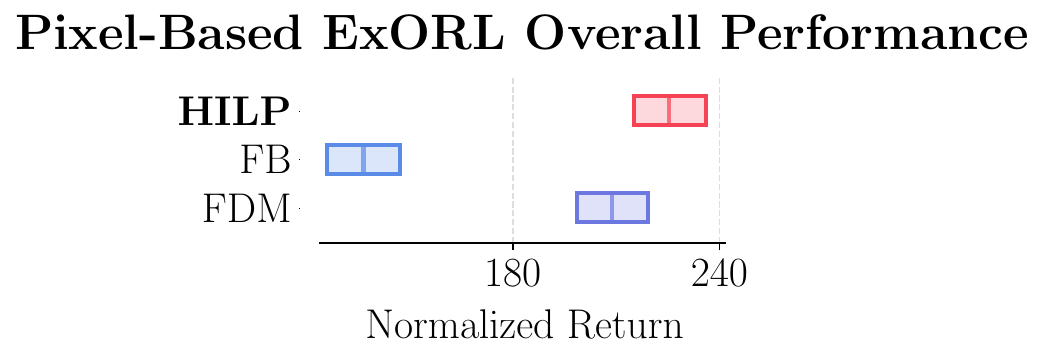}
    \vspace{-5pt}
    \caption{
    \textbf{Pixel-based zero-shot RL performance.}
    HILP exhibits the best performance in the \emph{pixel-based} ExORL benchmark as well,
    outperforming the two most performant baselines among FB and SF methods.
    The results are aggregated over $4$ environments, $4$ tasks, $4$ datasets, and $4$ seeds (\ie, $256$ values in total).
    }
    \vspace{-13pt}
    \label{fig:zs_pixel}
\end{figure}

In these environments, we compare HILPs against four general unsupervised policy pre-training methods
as well as three methods that are specifically designed to solve goal-conditioned RL.
For the former group, we consider the three best zero-shot RL methods from the previous section (FB, FDM, Lap)
and random successor features (Rand).
For goal-conditioned approaches, we consider
goal-conditioned behavioral cloning (GC-BC)~\citep{goalgail_ding2019,gcsl_ghosh2021},
goal-conditioned IQL (GC-IQL)~\citep{iql_kostrikov2022,hiql_park2023},
and goal-conditioned CQL (GC-CQL)~\citep{cql_kumar2020}.
To adapt HILPs to goal-conditioned tasks,
we employ the zero-shot prompting scheme (\Cref{sec:zs_gcrl})
as well as the test-time midpoint planning scheme with three recursions (\Cref{sec:planning}, ``HILP-Plan'').
For FB, we use the backward representation of the goal state to obtain the latent vector,
and for SF methods,
we perform linear regression with respect to the original reward function (\ie, they use privileged reward information),
as there is no clear way to adapt them to goal-conditioned tasks.

\begin{table*}[t!]
    \vspace{-5pt}
    \caption{
    \footnotesize
    \textbf{Offline goal-conditioned RL performance ($\mathbf{8}$ seeds).}
    HILP achieves the best performance among general offline unsupervised policy learning methods,
    while being comparable to methods that are specifically designed to solve offline goal-conditioned RL.
    With our efficient test-time planning procedure based on Hilbert representations, HILPs often even outperform offline goal-conditioned RL methods.
    }
    \vspace{5pt}
    \label{table:gcrl}
    \centering
    \scalebox{0.83}
    {
    \setlength{\tabcolsep}{5pt}
    \begin{tabular}{lrrrrrrrrr}
    \toprule
    \multicolumn{1}{c}{} & \multicolumn{3}{c}{GCRL-dedicated methods} & \multicolumn{6}{c}{General unsupervised policy learning methods}\\
    \cmidrule(lr){2-4} \cmidrule(lr){5-10}
    Dataset & GC-BC & GC-IQL & GC-CQL & FB & FDM & Lap & Rand & \textbf{HILP} (\textbf{ours}) & \textbf{HILP-Plan} (\textbf{ours}) \\
    \midrule
antmaze-large-diverse & $15.0$ {\tiny $\pm 9.3$} & $56.0$ {\tiny $\pm 6.0$} & $36.2$ {\tiny $\pm 19.0$} & $0.0$ {\tiny $\pm 0.0$} & $0.0$ {\tiny $\pm 0.0$} & $0.0$ {\tiny $\pm 0.0$} & $0.0$ {\tiny $\pm 0.0$} & $46.0$ {\tiny $\pm 12.7$} & $\mathbf{64.5}$ {\tiny $\pm 10.2$} \\
antmaze-large-play & $12.0$ {\tiny $\pm 5.9$} & $\mathbf{56.0}$ {\tiny $\pm 25.7$} & $32.0$ {\tiny $\pm 25.8$} & $0.0$ {\tiny $\pm 0.0$} & $0.0$ {\tiny $\pm 0.0$} & $0.0$ {\tiny $\pm 0.0$} & $0.0$ {\tiny $\pm 0.0$} & $49.0$ {\tiny $\pm 8.8$} & $\mathbf{58.8}$ {\tiny $\pm 11.2$} \\
antmaze-ultra-diverse & $30.5$ {\tiny $\pm 10.1$} & $40.8$ {\tiny $\pm 11.1$} & $14.2$ {\tiny $\pm 13.5$} & $0.0$ {\tiny $\pm 0.0$} & $0.0$ {\tiny $\pm 0.0$} & $0.0$ {\tiny $\pm 0.0$} & $0.0$ {\tiny $\pm 0.0$} & $21.2$ {\tiny $\pm 11.2$} & $\mathbf{59.2}$ {\tiny $\pm 12.7$} \\
antmaze-ultra-play & $26.5$ {\tiny $\pm 6.2$} & $41.8$ {\tiny $\pm 9.0$} & $16.5$ {\tiny $\pm 14.3$} & $0.0$ {\tiny $\pm 0.0$} & $0.0$ {\tiny $\pm 0.0$} & $0.0$ {\tiny $\pm 0.0$} & $0.0$ {\tiny $\pm 0.0$} & $22.2$ {\tiny $\pm 11.4$} & $\mathbf{50.8}$ {\tiny $\pm 9.6$} \\
kitchen-partial & $52.7$ {\tiny $\pm 11.0$} & $56.4$ {\tiny $\pm 8.4$} & $31.2$ {\tiny $\pm 16.6$} & $0.2$ {\tiny $\pm 0.7$} & $39.2$ {\tiny $\pm 4.2$} & $41.2$ {\tiny $\pm 9.3$} & $44.6$ {\tiny $\pm 8.9$} & $\mathbf{63.9}$ {\tiny $\pm 5.7$} & $59.7$ {\tiny $\pm 5.1$} \\
kitchen-mixed & $\mathbf{58.8}$ {\tiny $\pm 8.0$} & $\mathbf{59.5}$ {\tiny $\pm 3.8$} & $15.7$ {\tiny $\pm 17.6$} & $0.7$ {\tiny $\pm 1.9$} & $42.9$ {\tiny $\pm 7.1$} & $40.9$ {\tiny $\pm 3.0$} & $46.6$ {\tiny $\pm 2.3$} & $55.5$ {\tiny $\pm 9.5$} & $51.9$ {\tiny $\pm 8.3$} \\
visual-kitchen-partial & $\mathbf{63.2}$ {\tiny $\pm 3.7$} & $\mathbf{63.6}$ {\tiny $\pm 4.2$} & - & - & - & $43.3$ {\tiny $\pm 2.2$} & $39.7$ {\tiny $\pm 3.3$} & $59.9$ {\tiny $\pm 4.0$} & $52.1$ {\tiny $\pm 5.3$} \\
visual-kitchen-mixed & $\mathbf{57.5}$ {\tiny $\pm 4.2$} & $52.9$ {\tiny $\pm 4.7$} & - & - & - & $50.9$ {\tiny $\pm 3.6$} & $49.6$ {\tiny $\pm 5.7$} & $\mathbf{55.9}$ {\tiny $\pm 9.7$} & $54.1$ {\tiny $\pm 13.4$} \\
    \bottomrule
    \end{tabular}
    }
    \vspace{-5pt}
\end{table*}

\Cref{table:gcrl} shows the results,
which suggest HILPs can solve challenging long-horizon goal-conditioned tasks in a zero-shot manner,
and significantly outperform previous general unsupervised policy learning methods (FB and three SF methods).
This is likely because our method prioritizes learning long-horizon behaviors that span the state space,
unlike previous successor features or forward-backward methods.
\Cref{table:gcrl} also shows that
HILPs with test-time planning often even outperform GCRL-dedicated methods (\eg, GC-IQL).
We note that this planning procedure can be done only with elementary algebraic operations
based on pre-computed representations (\Cref{sec:planning}),
thanks to our structured Hilbert representations.
To further study the effect of test-time planning,
we compare the performances of HILPs with different numbers ($0$-$3$) of midpoint recursions.
We report results in \Cref{fig:gcrl_plan},
which shows that iterative refinement of prompts via our test-time planning approach improves performance consistently.
We refer to \Cref{sec:add_results} for further analyses,
including an \textbf{ablation study}, \textbf{embedding error analysis}, and \textbf{latent space visualization}.

\cutsubsectionup
\subsection{Hierarchical RL}
\cutsubsectiondown
\label{sec:exp_hrl}

Finally, to evaluate the effectiveness of skills learned by HILPs compared to previous BC-based skill learning methods,
we compare HILPs with OPAL~\citep{opal_ajay2021},
a previous offline skill extraction method based on a trajectory variational autoencoder (VAE).
For benchmarks, we use the AntMaze and Kitchen datasets from D4RL~\citep{d4rl_fu2020}.
On top of skills learned by HILPs or OPAL in these environments,
we train a high-level skill policy $\pi^h(z \mid s)$ that uses learned skills as high-level actions
with IQL~\citep{iql_kostrikov2022}.
We use the same IQL-based high-level policy training scheme to ensure a fair comparison.
\Cref{table:hrl} shows the results,
which suggest that HILPs achieve better performances than OPAL,
especially in the most challenging AntMaze-Ultra tasks,
likely because HILPs capture optimal, long-horizon behaviors.
Additionally, we note that, unlike OPAL or similar VAE-based approaches,
HILPs provide multiple zero-shot prompting schemes to query the learned policy to solve downstream tasks
without a separate high-level policy,
as shown in previous sections.
\begin{figure}[t!]
    \centering
    \vspace{-5pt}
    \includegraphics[width=0.9\linewidth]{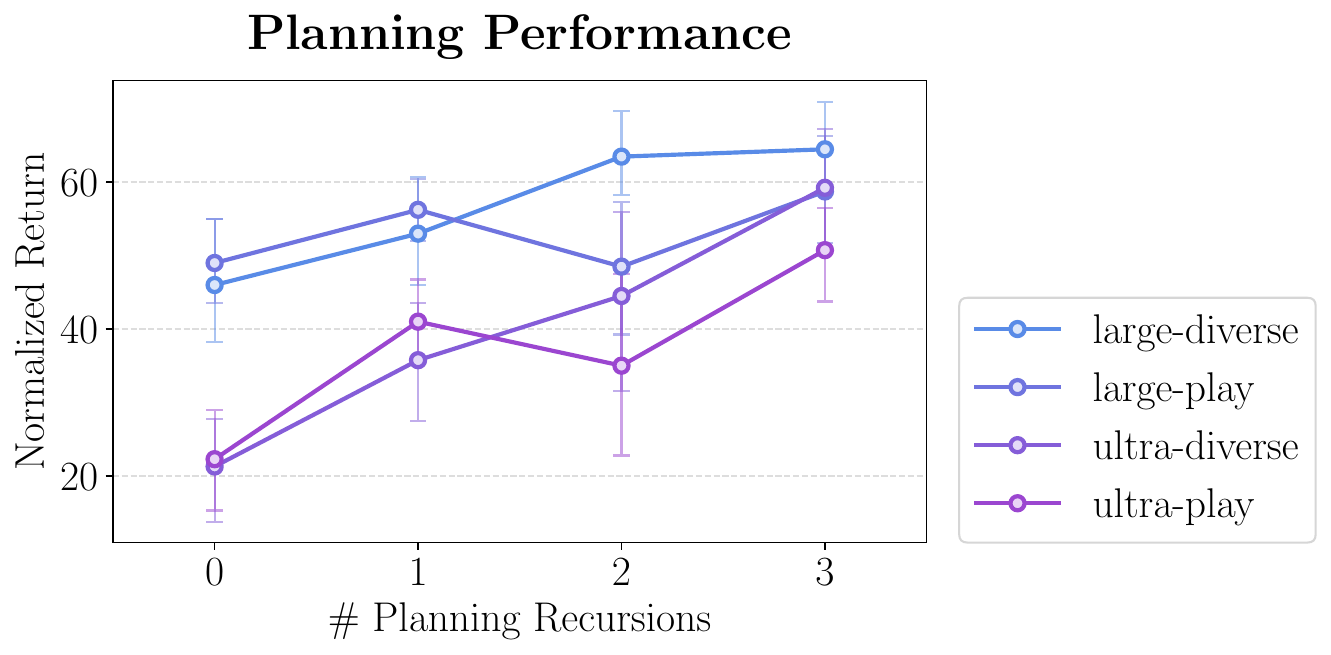}
    \vspace{-12pt}
    \caption{
    \footnotesize
    \textbf{Planning performance in AntMaze ($\mathbf{8}$ seeds).}
    Test-time midpoint planning (\Cref{sec:planning}) improves performance
    as the number of recursive refinements increases.
    }
    \label{fig:gcrl_plan}
    \vspace{-10pt}
\end{figure}

\cutsectionup
\section{Conclusion}
\cutsectiondown
In this work, we introduced Hilbert foundation policies (HILPs),
a general-purpose offline policy pre-training scheme
based on the idea of spanning a structured latent space that captures the temporal structure of the MDP.
We showed that structured Hilbert representations enable zero-shot prompting schemes for zero-shot RL and goal-conditioned RL
as well as test-time planning
to adapt pre-trained HILPs to downstream tasks.
Through our experiments,
we demonstrated that our single HILP framework often outperforms previous specialized methods for zero-shot RL,
goal-conditioned RL, and hierarchical RL.

\begin{table}[t!]
    \vspace{-10pt}
    \caption{
    \footnotesize
    \textbf{Hierarchical RL performance ($\mathbf{8}$ seeds).}
    HILP outperforms OPAL, a previous VAE-based offline skill learning method.
    }
    \vspace{5pt}
    \label{table:hrl}
    \centering
    \scalebox{0.92}
    {
    \begin{tabular}{lrrrrrr}
    \toprule
    Dataset & OPAL & \textbf{HILP} (\textbf{ours})\\
    \midrule
antmaze-large-diverse & $\mathbf{59.2}$ {\tiny $\pm 12.5$} & $56.0$ {\tiny $\pm 9.4$} \\
antmaze-large-play & $\mathbf{58.0}$ {\tiny $\pm 9.6$} & $\mathbf{58.0}$ {\tiny $\pm 11.1$} \\
antmaze-ultra-diverse & $21.0$ {\tiny $\pm 10.1$} & $\mathbf{43.8}$ {\tiny $\pm 8.6$} \\
antmaze-ultra-play & $22.8$ {\tiny $\pm 7.6$} & $\mathbf{47.8}$ {\tiny $\pm 13.2$} \\
kitchen-partial & $\mathbf{54.8}$ {\tiny $\pm 13.3$} & $\mathbf{54.1}$ {\tiny $\pm 3.9$} \\
kitchen-mixed & $\mathbf{58.2}$ {\tiny $\pm 7.5$} & $48.2$ {\tiny $\pm 10.1$} \\
    \midrule
\textbf{{Average}} & $45.7$ & $\mathbf{51.3}$ \\
    \bottomrule
    \end{tabular}
    }
    \vspace{-10pt}
\end{table}

\textbf{Final remarks.}
Offline unsupervised policy pre-training is all about determining and prioritizing the right behaviors to capture from offline data.
Prior works have proposed simply cloning dataset actions, capturing goal-reaching behaviors, or learning to maximize linear combinations of state features.
In this work, we proposed capturing long-horizon, state-spanning behaviors.
This is desirable because such global behaviors are usually harder to learn than local behaviors, and thus are worth capturing during pre-training.
However, one may still wonder:
``What if the environment is stochastic or partially observable, where an isometric embedding does not exist?''
``Are directional latent movements sufficient?''
``Is zero-shot task adaptation (without fine-tuning) the right way to \emph{use} learned behaviors?''
These are all important and valuable questions, and finding satisfying answers to these questions would lead to exciting future work.
For example, we may learn a \emph{local isometry}~\citep{riemannian_lee2018} instead of a global isometry
to handle partially observable environments,
learn more diverse latent movements to enhance expressivity,
or explore fine-tuning or few-shot learning for better task adaptation.
We further discuss limitations and future work in \Cref{sec:limit}.
Nonetheless, we hope that this work represents a step toward ideal offline unsupervised policy pre-training.

\cutsectionup
\section*{Impact Statement}
\cutsectiondown
This paper presents work whose goal is to advance the field of Machine Learning. There are many potential societal consequences of our work, none of which we feel must be specifically highlighted here.

\cutsectionup
\section*{Acknowledgments}
\cutsectiondown
We would like to thank Oleg Rybkin for an informative discussion,
and Qiyang Li and Yifei Zhou for helpful feedback on earlier drafts of this work.
This work was partly supported by the Korea Foundation for Advanced Studies (KFAS),
Berkeley Chancellor's Fellowship,
ONR N00014-21-1-2838, AFOSR FA9550-22-1-0273, and Qualcomm.
This research used the Savio computational cluster resource provided by the Berkeley Research Computing program at UC Berkeley.

\bibliography{icml2024}
\bibliographystyle{icml2024}

\newpage
\appendix
\onecolumn

\section{Limitations}
\label{sec:limit}
As mentioned in \Cref{sec:rep},
one limitation of our Hilbert representation objective is that
a symmetric Hilbert space might not be expressive enough to capture arbitrary MDPs.
Although we empirically demonstrate that HILPs exhibit strong performances
in various complex, long-horizon simulated robotic environments,
they might struggle in highly asymmetric or disconnected MDPs (\eg, environments in which gravity plays a significant role),
where there might not exist a reasonable approximate isometry to a Hilbert space.
We believe this limitation may be resolved by combining the notion of an inner product
with a universal quasimetric embedding~\citep{dist_pitis2020,quasi_wang2023},
which we leave for future work.
Another limitation is that our value-based representation learning objective (\Cref{eq:obj_phi})
might be optimistically biased in stochastic or partially observable environments~\citep{hiql_park2023}.
We believe combining our method with history-conditioned policies or recurrent world models may be one possible solution to deal with such MDPs.
We also note that our experiments assume the state space and environment dynamics to be the same at evaluation time.
We leave applying HILPs to multi-environment or transfer learning settings for future work.
Finally, we use Euclidean spaces as Hilbert spaces for our experiments in this work.
We believe applying HILPs to more general Hilbert spaces,
such as the $L^2$ function space or reproducing kernel Hilbert spaces,
may significantly enhance the expressivity of Hilbert representations.

\begin{figure}[t!]
    \centering
    \includegraphics[width=0.95\linewidth]{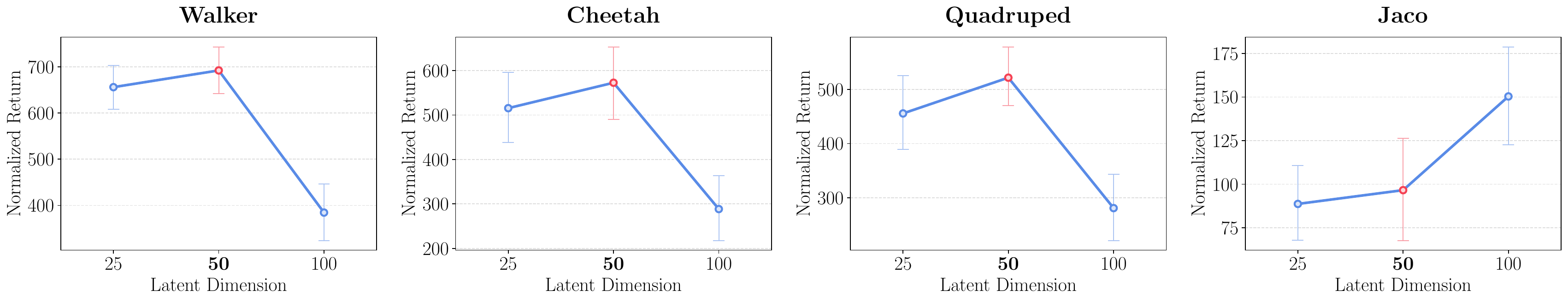}
    \vspace{-10pt}
    \caption{
    \footnotesize
    \textbf{Ablation study of latent dimensions for zero-shot RL.}
    In the ExORL benchmark, $D=50$ generally leads to the best performance across the environments.
    The results are aggregated over $4$ tasks, $4$ datasets, and $4$ seeds (\ie, $64$ values in total).
    }
    \label{fig:zs_abl}
\end{figure}

\begin{figure}[t!]
    \centering
    \includegraphics[width=0.95\linewidth]{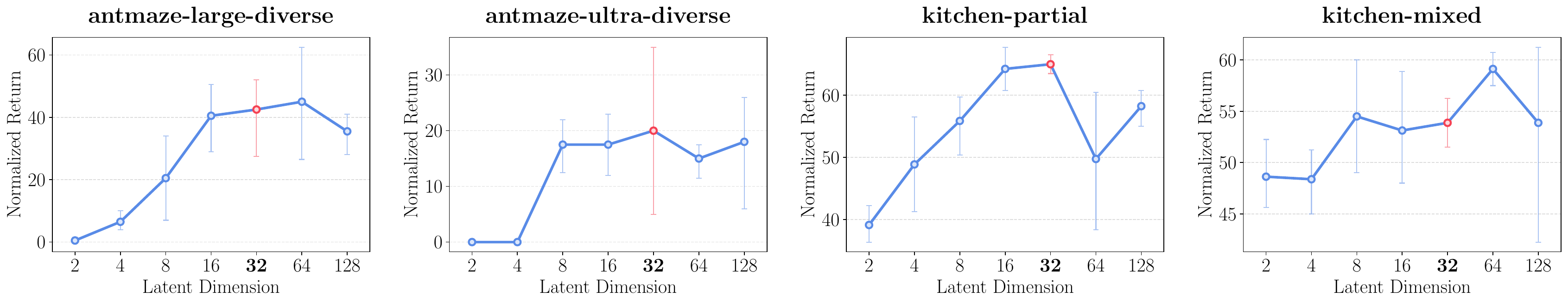}
    \vspace{-10pt}
    \caption{
    \footnotesize
    \textbf{Ablation study of latent dimensions for offline goal-conditioned RL ($\mathbf{4}$ seeds).}
    In D4RL tasks, $D=32$ generally leads to the best performance across the environments.
    }
    \label{fig:gcrl_dim_abl}
\end{figure}

\begin{figure}[t!]
    \centering
    \includegraphics[width=0.467\linewidth]{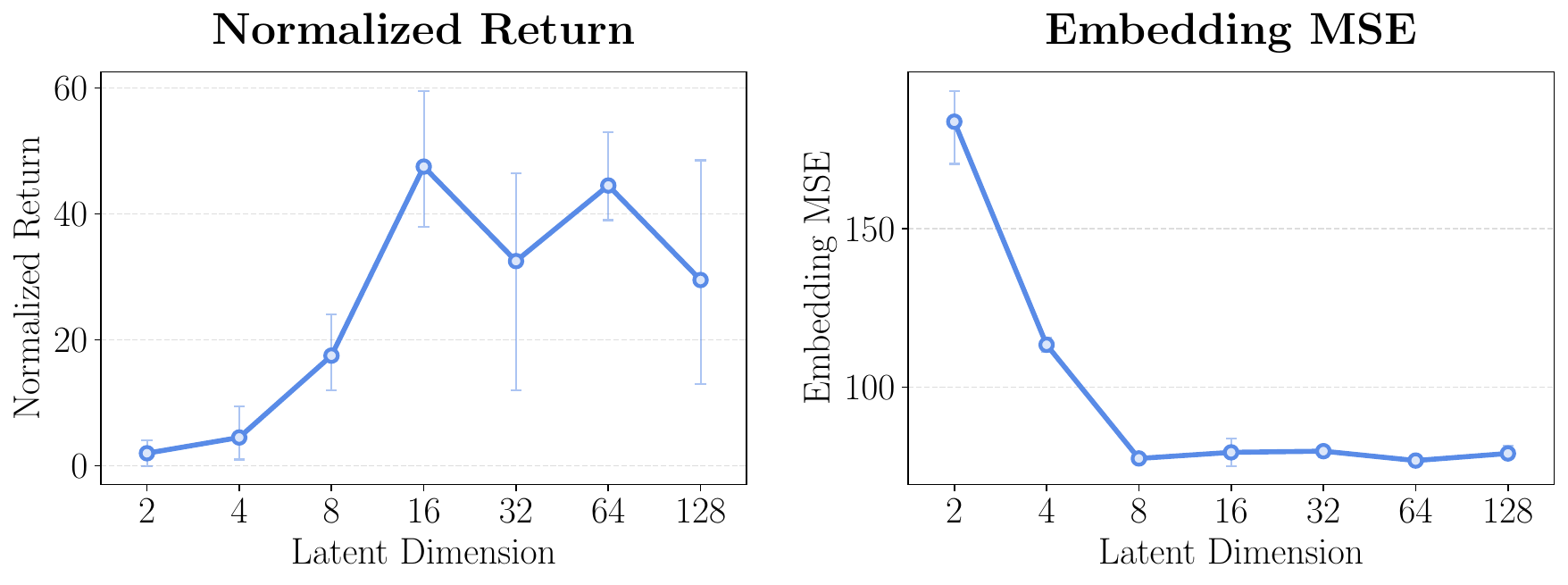}
    \vspace{-10pt}
    \caption{
    \footnotesize
    \textbf{Embedding error analysis ($\mathbf{4}$ seeds).}
    We show the relationship between performance and Hilbert embedding errors on the antmaze-large-diverse task.
    In general, lower embedding errors lead to better goal-reaching performance.
    }
    \label{fig:gcrl_dim_oracle_mse}
\end{figure}

\clearpage

\begin{figure}[t!]
    \centering
    \includegraphics[width=0.95\linewidth]{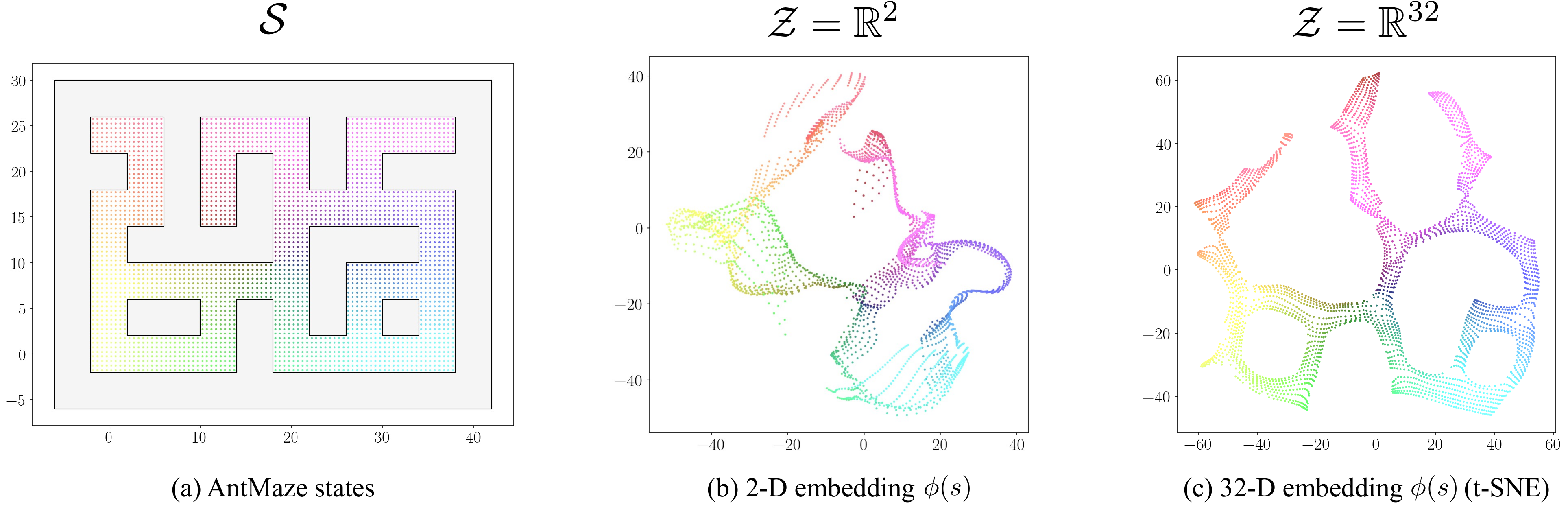}
    \vspace{-10pt}
    \caption{
    \footnotesize
    \textbf{Visualization of Hilbert representations.}
    We visualize Hilbert representations learned on the antmaze-large-diverse dataset.
    Since Hilbert representations are learned to capture the temporal structure of the MDP,
    they focus on the global layout of the maze even when we use a two-dimensional latent space ($D=2$),
    and accurately capture the maze layout with a $32$-dimensional latent space ($D=32$).
    }
    \label{fig:qual}
\end{figure}

\section{Additional Results}
\label{sec:add_results}

\textbf{Ablation study.}
\Cref{fig:gcrl_dim_abl,fig:zs_abl} show how the dimension $D$ of the latent space $\gZ = \sR^D$
affects the performances of zero-shot RL and offline goal-conditioned RL (without planning),
where we use $D=50$ for zero-shot RL and $D=32$ for goal-conditioned RL in our main experiments.
The results suggest that a latent dimension between $25$ and $64$
generally leads to the best performance in both cases.

\textbf{Embedding error analysis.}
To understand the relationship between Hilbert embedding errors and goal-reaching performance,
we compare the mean squared errors (MSEs) of Hilbert representations (\ie, $\E[(d^*(s, g) - \|\phi(s) - \phi(g)\|)^2]$)
and the final performances with different embedding dimensions on the antmaze-large-diverse task.
To approximate the ground-truth temporal distance $d^*(s, g)$ in practice,
we employ a monolithic goal-conditioned value function $V(s, g) \approx -d^*(s, g)$
trained with a separate goal-conditioned IQL objective.
We use the same IQL expectile of $0.9$ for both value functions in this analysis.
\Cref{fig:gcrl_dim_oracle_mse} shows the results,
suggesting that low embedding errors generally lead to better goal-reaching performances,
as predicted by \Cref{thm:gcrl}.

\textbf{Visualization of Hilbert representations.}
We train Hilbert representations with two different latent dimensions ($D \in \{2, 32\}$)
on the antmaze-large-diverse dataset,
and visualize the learned latent spaces in \Cref{fig:qual}.
We use a $t$-distributed stochastic neighbor embedding ($t$-SNE) to visualize $32$-dimensional latent states.
Since Hilbert representations are learned to preserve the temporal structure of the underlying environment,
they focus on the global layout of the maze even when we use a very low-dimensional latent space ($D=2$, \Cref{fig:qual}b),
and accurately capture the layout with $D=32$ (\Cref{fig:qual}c). 

\textbf{Full results on the ExORL benchmark.}
\Cref{table:exorl_full,table:exorl_pixel_full} present the full results (\emph{unnormalized} returns)
on the state- and pixel-based ExORL benchmarks.
\Cref{fig:zs_envs2,fig:zs_tasks,fig:zs_urls,fig:zs_pixel_envs2,fig:zs_pixel_tasks,fig:zs_pixel_urls}
show plots with three different aggregation criteria (per-environment, per-dataset, and per-task)
on the state- and pixel-based ExORL benchmarks.

\clearpage

\begin{table*}
    \caption{
    \footnotesize
    \textbf{Full results on the state-based ExORL benchmark ($\mathbf{4}$ seeds).}
    The table shows the \emph{unnormalized} return averaged over four seeds in each setting.
    }
    \vspace{5pt}
    \label{table:exorl_full}
    \centering
    \scalebox{0.69}{
    \setlength{\tabcolsep}{5pt}
    \begin{tabular}{lllrrrrrrrrrr}
    \toprule
    Dataset & Environment & Task & GC-TD3 & CL & Rand & IDM & AE & Lap & FDM & FB & \textbf{HILP} (\textbf{ours}) & \textbf{HILP-G} (\textbf{ours}) \\
    \midrule
\multirow{16}{*}{APS} & \multirow{4}{*}{Walker} & Flip & $406$ {\tiny $\pm 153$} & $137$ {\tiny $\pm 66$} & $92$ {\tiny $\pm 41$} & $354$ {\tiny $\pm 181$} & $289$ {\tiny $\pm 37$} & $390$ {\tiny $\pm 128$} & $426$ {\tiny $\pm 68$} & $334$ {\tiny $\pm 178$} & $573$ {\tiny $\pm 37$} & - \\
& & Run & $274$ {\tiny $\pm 59$} & $75$ {\tiny $\pm 33$} & $79$ {\tiny $\pm 18$} & $357$ {\tiny $\pm 78$} & $140$ {\tiny $\pm 18$} & $217$ {\tiny $\pm 72$} & $248$ {\tiny $\pm 23$} & $388$ {\tiny $\pm 27$} & $348$ {\tiny $\pm 14$} & - \\
& & Stand & $853$ {\tiny $\pm 78$} & $438$ {\tiny $\pm 64$} & $372$ {\tiny $\pm 76$} & $846$ {\tiny $\pm 113$} & $694$ {\tiny $\pm 54$} & $637$ {\tiny $\pm 150$} & $865$ {\tiny $\pm 77$} & $824$ {\tiny $\pm 54$} & $883$ {\tiny $\pm 42$} & - \\
& & Walk & $627$ {\tiny $\pm 201$} & $185$ {\tiny $\pm 112$} & $221$ {\tiny $\pm 68$} & $596$ {\tiny $\pm 70$} & $323$ {\tiny $\pm 131$} & $495$ {\tiny $\pm 84$} & $634$ {\tiny $\pm 91$} & $842$ {\tiny $\pm 105$} & $862$ {\tiny $\pm 31$} & - \\
\cmidrule{2-13}
& \multirow{4}{*}{Cheetah} & Run & $133$ {\tiny $\pm 95$} & $8$ {\tiny $\pm 8$} & $165$ {\tiny $\pm 82$} & $430$ {\tiny $\pm 24$} & $119$ {\tiny $\pm 52$} & $263$ {\tiny $\pm 28$} & $116$ {\tiny $\pm 116$} & $250$ {\tiny $\pm 135$} & $373$ {\tiny $\pm 72$} & - \\
& & Run Backward & $156$ {\tiny $\pm 137$} & $38$ {\tiny $\pm 24$} & $192$ {\tiny $\pm 31$} & $466$ {\tiny $\pm 30$} & $335$ {\tiny $\pm 30$} & $240$ {\tiny $\pm 18$} & $360$ {\tiny $\pm 17$} & $251$ {\tiny $\pm 39$} & $316$ {\tiny $\pm 21$} & - \\
& & Walk & $695$ {\tiny $\pm 334$} & $32$ {\tiny $\pm 45$} & $633$ {\tiny $\pm 295$} & $988$ {\tiny $\pm 1$} & $414$ {\tiny $\pm 161$} & $964$ {\tiny $\pm 26$} & $396$ {\tiny $\pm 287$} & $683$ {\tiny $\pm 267$} & $939$ {\tiny $\pm 55$} & - \\
& & Walk Backward & $930$ {\tiny $\pm 33$} & $197$ {\tiny $\pm 121$} & $905$ {\tiny $\pm 62$} & $986$ {\tiny $\pm 1$} & $973$ {\tiny $\pm 14$} & $984$ {\tiny $\pm 1$} & $982$ {\tiny $\pm 2$} & $980$ {\tiny $\pm 3$} & $985$ {\tiny $\pm 2$} & - \\
\cmidrule{2-13}
& \multirow{4}{*}{Quadruped} & Jump & $732$ {\tiny $\pm 62$} & $78$ {\tiny $\pm 67$} & $780$ {\tiny $\pm 14$} & $144$ {\tiny $\pm 97$} & $740$ {\tiny $\pm 58$} & $696$ {\tiny $\pm 68$} & $707$ {\tiny $\pm 30$} & $757$ {\tiny $\pm 52$} & $623$ {\tiny $\pm 149$} & - \\
& & Run & $420$ {\tiny $\pm 37$} & $77$ {\tiny $\pm 90$} & $486$ {\tiny $\pm 3$} & $87$ {\tiny $\pm 36$} & $481$ {\tiny $\pm 11$} & $483$ {\tiny $\pm 12$} & $481$ {\tiny $\pm 4$} & $474$ {\tiny $\pm 33$} & $411$ {\tiny $\pm 62$} & - \\
& & Stand & $938$ {\tiny $\pm 32$} & $131$ {\tiny $\pm 128$} & $965$ {\tiny $\pm 4$} & $177$ {\tiny $\pm 106$} & $944$ {\tiny $\pm 19$} & $914$ {\tiny $\pm 65$} & $961$ {\tiny $\pm 20$} & $949$ {\tiny $\pm 30$} & $797$ {\tiny $\pm 117$} & - \\
& & Walk & $486$ {\tiny $\pm 53$} & $91$ {\tiny $\pm 106$} & $513$ {\tiny $\pm 51$} & $99$ {\tiny $\pm 40$} & $600$ {\tiny $\pm 131$} & $550$ {\tiny $\pm 53$} & $578$ {\tiny $\pm 145$} & $584$ {\tiny $\pm 12$} & $605$ {\tiny $\pm 75$} & - \\
\cmidrule{2-13}
& \multirow{4}{*}{Jaco} & Reach Bottom Left & $89$ {\tiny $\pm 53$} & $1$ {\tiny $\pm 1$} & $5$ {\tiny $\pm 6$} & $37$ {\tiny $\pm 20$} & $1$ {\tiny $\pm 0$} & $16$ {\tiny $\pm 6$} & $12$ {\tiny $\pm 18$} & $14$ {\tiny $\pm 12$} & $88$ {\tiny $\pm 41$} & $78$ {\tiny $\pm 11$} \\
& & Reach Bottom Right & $121$ {\tiny $\pm 80$} & $0$ {\tiny $\pm 0$} & $5$ {\tiny $\pm 9$} & $31$ {\tiny $\pm 14$} & $6$ {\tiny $\pm 5$} & $10$ {\tiny $\pm 4$} & $28$ {\tiny $\pm 20$} & $24$ {\tiny $\pm 7$} & $48$ {\tiny $\pm 24$} & $84$ {\tiny $\pm 14$} \\
& & Reach Top Left & $71$ {\tiny $\pm 48$} & $1$ {\tiny $\pm 1$} & $3$ {\tiny $\pm 3$} & $20$ {\tiny $\pm 16$} & $5$ {\tiny $\pm 6$} & $51$ {\tiny $\pm 40$} & $21$ {\tiny $\pm 16$} & $23$ {\tiny $\pm 17$} & $49$ {\tiny $\pm 18$} & $75$ {\tiny $\pm 3$} \\
& & Reach Top Right & $71$ {\tiny $\pm 74$} & $1$ {\tiny $\pm 1$} & $7$ {\tiny $\pm 7$} & $24$ {\tiny $\pm 21$} & $12$ {\tiny $\pm 20$} & $26$ {\tiny $\pm 15$} & $34$ {\tiny $\pm 40$} & $17$ {\tiny $\pm 15$} & $51$ {\tiny $\pm 32$} & $80$ {\tiny $\pm 13$} \\
\cmidrule{1-13}
\multirow{16}{*}{APT} & \multirow{4}{*}{Walker} & Flip & $413$ {\tiny $\pm 44$} & $162$ {\tiny $\pm 61$} & $99$ {\tiny $\pm 14$} & $480$ {\tiny $\pm 48$} & $486$ {\tiny $\pm 53$} & $575$ {\tiny $\pm 37$} & $519$ {\tiny $\pm 80$} & $526$ {\tiny $\pm 89$} & $714$ {\tiny $\pm 89$} & - \\
& & Run & $187$ {\tiny $\pm 19$} & $92$ {\tiny $\pm 27$} & $88$ {\tiny $\pm 4$} & $328$ {\tiny $\pm 26$} & $284$ {\tiny $\pm 22$} & $301$ {\tiny $\pm 30$} & $338$ {\tiny $\pm 69$} & $386$ {\tiny $\pm 13$} & $440$ {\tiny $\pm 39$} & - \\
& & Stand & $757$ {\tiny $\pm 277$} & $531$ {\tiny $\pm 121$} & $593$ {\tiny $\pm 139$} & $862$ {\tiny $\pm 29$} & $866$ {\tiny $\pm 85$} & $791$ {\tiny $\pm 104$} & $873$ {\tiny $\pm 44$} & $884$ {\tiny $\pm 10$} & $877$ {\tiny $\pm 68$} & - \\
& & Walk & $673$ {\tiny $\pm 256$} & $138$ {\tiny $\pm 52$} & $232$ {\tiny $\pm 156$} & $640$ {\tiny $\pm 160$} & $775$ {\tiny $\pm 91$} & $685$ {\tiny $\pm 94$} & $719$ {\tiny $\pm 145$} & $891$ {\tiny $\pm 41$} & $843$ {\tiny $\pm 57$} & - \\
\cmidrule{2-13}
& \multirow{4}{*}{Cheetah} & Run & $101$ {\tiny $\pm 126$} & $68$ {\tiny $\pm 19$} & $51$ {\tiny $\pm 17$} & $398$ {\tiny $\pm 107$} & $79$ {\tiny $\pm 26$} & $172$ {\tiny $\pm 60$} & $194$ {\tiny $\pm 75$} & $141$ {\tiny $\pm 91$} & $269$ {\tiny $\pm 69$} & - \\
& & Run Backward & $38$ {\tiny $\pm 19$} & $26$ {\tiny $\pm 10$} & $30$ {\tiny $\pm 7$} & $331$ {\tiny $\pm 86$} & $146$ {\tiny $\pm 41$} & $157$ {\tiny $\pm 81$} & $188$ {\tiny $\pm 105$} & $49$ {\tiny $\pm 9$} & $157$ {\tiny $\pm 98$} & - \\
& & Walk & $475$ {\tiny $\pm 350$} & $325$ {\tiny $\pm 76$} & $218$ {\tiny $\pm 42$} & $875$ {\tiny $\pm 131$} & $326$ {\tiny $\pm 139$} & $703$ {\tiny $\pm 235$} & $684$ {\tiny $\pm 215$} & $439$ {\tiny $\pm 307$} & $808$ {\tiny $\pm 142$} & - \\
& & Walk Backward & $265$ {\tiny $\pm 66$} & $128$ {\tiny $\pm 49$} & $149$ {\tiny $\pm 50$} & $946$ {\tiny $\pm 59$} & $455$ {\tiny $\pm 114$} & $642$ {\tiny $\pm 238$} & $646$ {\tiny $\pm 199$} & $208$ {\tiny $\pm 43$} & $779$ {\tiny $\pm 229$} & - \\
\cmidrule{2-13}
& \multirow{4}{*}{Quadruped} & Jump & $608$ {\tiny $\pm 101$} & $281$ {\tiny $\pm 166$} & $686$ {\tiny $\pm 49$} & $167$ {\tiny $\pm 67$} & $593$ {\tiny $\pm 56$} & $743$ {\tiny $\pm 61$} & $722$ {\tiny $\pm 40$} & $738$ {\tiny $\pm 45$} & $686$ {\tiny $\pm 9$} & - \\
& & Run & $389$ {\tiny $\pm 70$} & $196$ {\tiny $\pm 130$} & $458$ {\tiny $\pm 22$} & $120$ {\tiny $\pm 37$} & $415$ {\tiny $\pm 41$} & $452$ {\tiny $\pm 21$} & $454$ {\tiny $\pm 37$} & $424$ {\tiny $\pm 73$} & $461$ {\tiny $\pm 16$} & - \\
& & Stand & $918$ {\tiny $\pm 77$} & $379$ {\tiny $\pm 246$} & $914$ {\tiny $\pm 44$} & $155$ {\tiny $\pm 48$} & $835$ {\tiny $\pm 82$} & $879$ {\tiny $\pm 75$} & $919$ {\tiny $\pm 34$} & $891$ {\tiny $\pm 50$} & $930$ {\tiny $\pm 30$} & - \\
& & Walk & $441$ {\tiny $\pm 43$} & $191$ {\tiny $\pm 104$} & $465$ {\tiny $\pm 18$} & $94$ {\tiny $\pm 23$} & $417$ {\tiny $\pm 42$} & $433$ {\tiny $\pm 75$} & $513$ {\tiny $\pm 64$} & $427$ {\tiny $\pm 35$} & $468$ {\tiny $\pm 22$} & - \\
\cmidrule{2-13}
& \multirow{4}{*}{Jaco} & Reach Bottom Left & $1$ {\tiny $\pm 1$} & $0$ {\tiny $\pm 0$} & $2$ {\tiny $\pm 2$} & $34$ {\tiny $\pm 12$} & $5$ {\tiny $\pm 9$} & $3$ {\tiny $\pm 4$} & $75$ {\tiny $\pm 30$} & $7$ {\tiny $\pm 8$} & $12$ {\tiny $\pm 17$} & $30$ {\tiny $\pm 16$} \\
& & Reach Bottom Right & $0$ {\tiny $\pm 1$} & $0$ {\tiny $\pm 0$} & $1$ {\tiny $\pm 1$} & $47$ {\tiny $\pm 14$} & $3$ {\tiny $\pm 3$} & $9$ {\tiny $\pm 13$} & $56$ {\tiny $\pm 22$} & $8$ {\tiny $\pm 3$} & $29$ {\tiny $\pm 44$} & $37$ {\tiny $\pm 14$} \\
& & Reach Top Left & $0$ {\tiny $\pm 0$} & $0$ {\tiny $\pm 0$} & $9$ {\tiny $\pm 15$} & $19$ {\tiny $\pm 5$} & $7$ {\tiny $\pm 13$} & $1$ {\tiny $\pm 1$} & $13$ {\tiny $\pm 12$} & $28$ {\tiny $\pm 17$} & $5$ {\tiny $\pm 4$} & $30$ {\tiny $\pm 14$} \\
& & Reach Top Right & $0$ {\tiny $\pm 0$} & $0$ {\tiny $\pm 0$} & $1$ {\tiny $\pm 1$} & $40$ {\tiny $\pm 10$} & $3$ {\tiny $\pm 6$} & $2$ {\tiny $\pm 3$} & $12$ {\tiny $\pm 4$} & $13$ {\tiny $\pm 10$} & $21$ {\tiny $\pm 24$} & $34$ {\tiny $\pm 11$} \\
\cmidrule{1-13}
\multirow{16}{*}{Proto} & \multirow{4}{*}{Walker} & Flip & $494$ {\tiny $\pm 153$} & $216$ {\tiny $\pm 61$} & $159$ {\tiny $\pm 41$} & $432$ {\tiny $\pm 62$} & $553$ {\tiny $\pm 105$} & $531$ {\tiny $\pm 104$} & $433$ {\tiny $\pm 64$} & $560$ {\tiny $\pm 94$} & $675$ {\tiny $\pm 62$} & - \\
& & Run & $324$ {\tiny $\pm 67$} & $141$ {\tiny $\pm 26$} & $68$ {\tiny $\pm 36$} & $251$ {\tiny $\pm 42$} & $312$ {\tiny $\pm 47$} & $352$ {\tiny $\pm 17$} & $296$ {\tiny $\pm 43$} & $425$ {\tiny $\pm 49$} & $402$ {\tiny $\pm 28$} & - \\
& & Stand & $796$ {\tiny $\pm 179$} & $629$ {\tiny $\pm 108$} & $407$ {\tiny $\pm 110$} & $903$ {\tiny $\pm 73$} & $916$ {\tiny $\pm 26$} & $897$ {\tiny $\pm 65$} & $900$ {\tiny $\pm 42$} & $844$ {\tiny $\pm 103$} & $930$ {\tiny $\pm 27$} & - \\
& & Walk & $866$ {\tiny $\pm 33$} & $383$ {\tiny $\pm 57$} & $190$ {\tiny $\pm 214$} & $588$ {\tiny $\pm 195$} & $851$ {\tiny $\pm 37$} & $864$ {\tiny $\pm 50$} & $873$ {\tiny $\pm 63$} & $905$ {\tiny $\pm 29$} & $905$ {\tiny $\pm 32$} & - \\
\cmidrule{2-13}
& \multirow{4}{*}{Cheetah} & Run & $135$ {\tiny $\pm 89$} & $4$ {\tiny $\pm 3$} & $123$ {\tiny $\pm 92$} & $370$ {\tiny $\pm 59$} & $160$ {\tiny $\pm 121$} & $282$ {\tiny $\pm 37$} & $363$ {\tiny $\pm 24$} & $223$ {\tiny $\pm 42$} & $227$ {\tiny $\pm 27$} & - \\
& & Run Backward & $173$ {\tiny $\pm 34$} & $5$ {\tiny $\pm 3$} & $177$ {\tiny $\pm 33$} & $383$ {\tiny $\pm 23$} & $254$ {\tiny $\pm 38$} & $218$ {\tiny $\pm 10$} & $309$ {\tiny $\pm 28$} & $151$ {\tiny $\pm 38$} & $234$ {\tiny $\pm 13$} & - \\
& & Walk & $923$ {\tiny $\pm 81$} & $23$ {\tiny $\pm 18$} & $556$ {\tiny $\pm 362$} & $939$ {\tiny $\pm 63$} & $563$ {\tiny $\pm 363$} & $982$ {\tiny $\pm 9$} & $944$ {\tiny $\pm 60$} & $949$ {\tiny $\pm 40$} & $973$ {\tiny $\pm 12$} & - \\
& & Walk Backward & $560$ {\tiny $\pm 368$} & $28$ {\tiny $\pm 17$} & $820$ {\tiny $\pm 136$} & $987$ {\tiny $\pm 1$} & $916$ {\tiny $\pm 106$} & $981$ {\tiny $\pm 6$} & $984$ {\tiny $\pm 1$} & $737$ {\tiny $\pm 160$} & $985$ {\tiny $\pm 2$} & - \\
\cmidrule{2-13}
& \multirow{4}{*}{Quadruped} & Jump & $298$ {\tiny $\pm 60$} & $27$ {\tiny $\pm 20$} & $176$ {\tiny $\pm 59$} & $69$ {\tiny $\pm 39$} & $289$ {\tiny $\pm 35$} & $214$ {\tiny $\pm 27$} & $287$ {\tiny $\pm 61$} & $231$ {\tiny $\pm 75$} & $282$ {\tiny $\pm 105$} & - \\
& & Run & $176$ {\tiny $\pm 103$} & $17$ {\tiny $\pm 15$} & $127$ {\tiny $\pm 29$} & $51$ {\tiny $\pm 24$} & $187$ {\tiny $\pm 27$} & $181$ {\tiny $\pm 52$} & $243$ {\tiny $\pm 37$} & $126$ {\tiny $\pm 26$} & $227$ {\tiny $\pm 42$} & - \\
& & Stand & $436$ {\tiny $\pm 45$} & $35$ {\tiny $\pm 28$} & $307$ {\tiny $\pm 35$} & $117$ {\tiny $\pm 60$} & $403$ {\tiny $\pm 48$} & $287$ {\tiny $\pm 52$} & $453$ {\tiny $\pm 63$} & $262$ {\tiny $\pm 16$} & $425$ {\tiny $\pm 156$} & - \\
& & Walk & $237$ {\tiny $\pm 39$} & $16$ {\tiny $\pm 15$} & $120$ {\tiny $\pm 38$} & $58$ {\tiny $\pm 11$} & $229$ {\tiny $\pm 25$} & $175$ {\tiny $\pm 60$} & $238$ {\tiny $\pm 56$} & $224$ {\tiny $\pm 123$} & $136$ {\tiny $\pm 50$} & - \\
\cmidrule{2-13}
& \multirow{4}{*}{Jaco} & Reach Bottom Left & $10$ {\tiny $\pm 12$} & $1$ {\tiny $\pm 2$} & $1$ {\tiny $\pm 1$} & $53$ {\tiny $\pm 19$} & $11$ {\tiny $\pm 8$} & $3$ {\tiny $\pm 4$} & $60$ {\tiny $\pm 28$} & $15$ {\tiny $\pm 24$} & $3$ {\tiny $\pm 4$} & $60$ {\tiny $\pm 14$} \\
& & Reach Bottom Right & $1$ {\tiny $\pm 1$} & $0$ {\tiny $\pm 0$} & $2$ {\tiny $\pm 3$} & $57$ {\tiny $\pm 28$} & $29$ {\tiny $\pm 35$} & $2$ {\tiny $\pm 2$} & $80$ {\tiny $\pm 31$} & $16$ {\tiny $\pm 31$} & $7$ {\tiny $\pm 12$} & $61$ {\tiny $\pm 4$} \\
& & Reach Top Left & $5$ {\tiny $\pm 3$} & $2$ {\tiny $\pm 2$} & $4$ {\tiny $\pm 6$} & $37$ {\tiny $\pm 11$} & $9$ {\tiny $\pm 8$} & $1$ {\tiny $\pm 0$} & $17$ {\tiny $\pm 17$} & $11$ {\tiny $\pm 5$} & $13$ {\tiny $\pm 12$} & $60$ {\tiny $\pm 5$} \\
& & Reach Top Right & $17$ {\tiny $\pm 13$} & $1$ {\tiny $\pm 1$} & $0$ {\tiny $\pm 0$} & $34$ {\tiny $\pm 16$} & $22$ {\tiny $\pm 21$} & $2$ {\tiny $\pm 3$} & $32$ {\tiny $\pm 27$} & $39$ {\tiny $\pm 40$} & $12$ {\tiny $\pm 9$} & $61$ {\tiny $\pm 4$} \\
\cmidrule{1-13}
\multirow{16}{*}{RND} & \multirow{4}{*}{Walker} & Flip & $298$ {\tiny $\pm 110$} & $115$ {\tiny $\pm 16$} & $252$ {\tiny $\pm 43$} & $453$ {\tiny $\pm 21$} & $499$ {\tiny $\pm 33$} & $563$ {\tiny $\pm 40$} & $416$ {\tiny $\pm 20$} & $548$ {\tiny $\pm 94$} & $563$ {\tiny $\pm 136$} & - \\
& & Run & $166$ {\tiny $\pm 27$} & $59$ {\tiny $\pm 19$} & $75$ {\tiny $\pm 47$} & $205$ {\tiny $\pm 69$} & $259$ {\tiny $\pm 18$} & $311$ {\tiny $\pm 29$} & $295$ {\tiny $\pm 70$} & $409$ {\tiny $\pm 15$} & $401$ {\tiny $\pm 30$} & - \\
& & Stand & $812$ {\tiny $\pm 133$} & $309$ {\tiny $\pm 100$} & $397$ {\tiny $\pm 50$} & $778$ {\tiny $\pm 52$} & $806$ {\tiny $\pm 44$} & $844$ {\tiny $\pm 35$} & $821$ {\tiny $\pm 84$} & $866$ {\tiny $\pm 120$} & $800$ {\tiny $\pm 61$} & - \\
& & Walk & $703$ {\tiny $\pm 275$} & $99$ {\tiny $\pm 35$} & $149$ {\tiny $\pm 145$} & $401$ {\tiny $\pm 160$} & $676$ {\tiny $\pm 135$} & $811$ {\tiny $\pm 77$} & $476$ {\tiny $\pm 259$} & $811$ {\tiny $\pm 52$} & $855$ {\tiny $\pm 34$} & - \\
\cmidrule{2-13}
& \multirow{4}{*}{Cheetah} & Run & $65$ {\tiny $\pm 66$} & $49$ {\tiny $\pm 34$} & $49$ {\tiny $\pm 24$} & $139$ {\tiny $\pm 67$} & $116$ {\tiny $\pm 12$} & $124$ {\tiny $\pm 32$} & $107$ {\tiny $\pm 26$} & $183$ {\tiny $\pm 83$} & $262$ {\tiny $\pm 53$} & - \\
& & Run Backward & $50$ {\tiny $\pm 35$} & $20$ {\tiny $\pm 6$} & $28$ {\tiny $\pm 10$} & $298$ {\tiny $\pm 27$} & $143$ {\tiny $\pm 86$} & $113$ {\tiny $\pm 37$} & $177$ {\tiny $\pm 62$} & $153$ {\tiny $\pm 41$} & $187$ {\tiny $\pm 55$} & - \\
& & Walk & $281$ {\tiny $\pm 116$} & $210$ {\tiny $\pm 123$} & $231$ {\tiny $\pm 121$} & $607$ {\tiny $\pm 181$} & $500$ {\tiny $\pm 114$} & $571$ {\tiny $\pm 111$} & $494$ {\tiny $\pm 122$} & $636$ {\tiny $\pm 291$} & $823$ {\tiny $\pm 141$} & - \\
& & Walk Backward & $146$ {\tiny $\pm 51$} & $130$ {\tiny $\pm 62$} & $151$ {\tiny $\pm 66$} & $956$ {\tiny $\pm 16$} & $528$ {\tiny $\pm 296$} & $536$ {\tiny $\pm 92$} & $652$ {\tiny $\pm 274$} & $677$ {\tiny $\pm 85$} & $843$ {\tiny $\pm 184$} & - \\
\cmidrule{2-13}
& \multirow{4}{*}{Quadruped} & Jump & $639$ {\tiny $\pm 106$} & $402$ {\tiny $\pm 363$} & $737$ {\tiny $\pm 48$} & $174$ {\tiny $\pm 48$} & $698$ {\tiny $\pm 94$} & $508$ {\tiny $\pm 182$} & $758$ {\tiny $\pm 98$} & $642$ {\tiny $\pm 36$} & $556$ {\tiny $\pm 101$} & - \\
& & Run & $435$ {\tiny $\pm 38$} & $289$ {\tiny $\pm 229$} & $458$ {\tiny $\pm 27$} & $103$ {\tiny $\pm 38$} & $446$ {\tiny $\pm 57$} & $346$ {\tiny $\pm 81$} & $491$ {\tiny $\pm 7$} & $436$ {\tiny $\pm 26$} & $393$ {\tiny $\pm 42$} & - \\
& & Stand & $910$ {\tiny $\pm 44$} & $555$ {\tiny $\pm 477$} & $934$ {\tiny $\pm 26$} & $240$ {\tiny $\pm 68$} & $831$ {\tiny $\pm 113$} & $681$ {\tiny $\pm 191$} & $971$ {\tiny $\pm 11$} & $797$ {\tiny $\pm 72$} & $810$ {\tiny $\pm 97$} & - \\
& & Walk & $470$ {\tiny $\pm 21$} & $303$ {\tiny $\pm 221$} & $541$ {\tiny $\pm 98$} & $92$ {\tiny $\pm 41$} & $492$ {\tiny $\pm 198$} & $441$ {\tiny $\pm 88$} & $601$ {\tiny $\pm 82$} & $642$ {\tiny $\pm 202$} & $542$ {\tiny $\pm 32$} & - \\
\cmidrule{2-13}
& \multirow{4}{*}{Jaco} & Reach Bottom Left & $1$ {\tiny $\pm 1$} & $0$ {\tiny $\pm 0$} & $1$ {\tiny $\pm 1$} & $60$ {\tiny $\pm 18$} & $1$ {\tiny $\pm 2$} & $18$ {\tiny $\pm 25$} & $53$ {\tiny $\pm 20$} & $18$ {\tiny $\pm 11$} & $19$ {\tiny $\pm 20$} & $46$ {\tiny $\pm 14$} \\
& & Reach Bottom Right & $1$ {\tiny $\pm 1$} & $0$ {\tiny $\pm 0$} & $2$ {\tiny $\pm 2$} & $45$ {\tiny $\pm 24$} & $1$ {\tiny $\pm 1$} & $7$ {\tiny $\pm 9$} & $38$ {\tiny $\pm 14$} & $29$ {\tiny $\pm 12$} & $18$ {\tiny $\pm 17$} & $55$ {\tiny $\pm 23$} \\
& & Reach Top Left & $0$ {\tiny $\pm 0$} & $0$ {\tiny $\pm 0$} & $0$ {\tiny $\pm 0$} & $34$ {\tiny $\pm 4$} & $6$ {\tiny $\pm 8$} & $5$ {\tiny $\pm 8$} & $36$ {\tiny $\pm 19$} & $45$ {\tiny $\pm 16$} & $8$ {\tiny $\pm 10$} & $47$ {\tiny $\pm 15$} \\
& & Reach Top Right & $0$ {\tiny $\pm 0$} & $1$ {\tiny $\pm 2$} & $1$ {\tiny $\pm 0$} & $22$ {\tiny $\pm 6$} & $6$ {\tiny $\pm 3$} & $8$ {\tiny $\pm 10$} & $44$ {\tiny $\pm 24$} & $22$ {\tiny $\pm 7$} & $5$ {\tiny $\pm 4$} & $38$ {\tiny $\pm 18$} \\
    \bottomrule
    \end{tabular}
    }
\end{table*}

\clearpage

\begin{figure}
    \centering
    \includegraphics[width=0.95\linewidth]{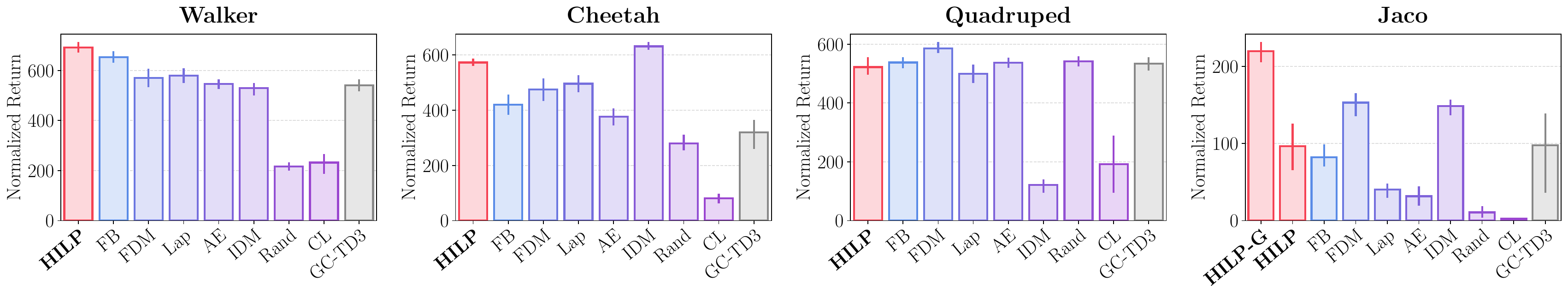}
    \vspace{-10pt}
    \caption{
    \footnotesize
    \textbf{Per-environment performances on the state-based ExORL benchmark.}
    The results are aggregated over $4$ tasks, $4$ datasets, and $4$ seeds (\ie, $64$ values in total).
    }
    \label{fig:zs_envs2}
\end{figure}
\begin{figure}
    \centering
    \includegraphics[width=0.95\linewidth]{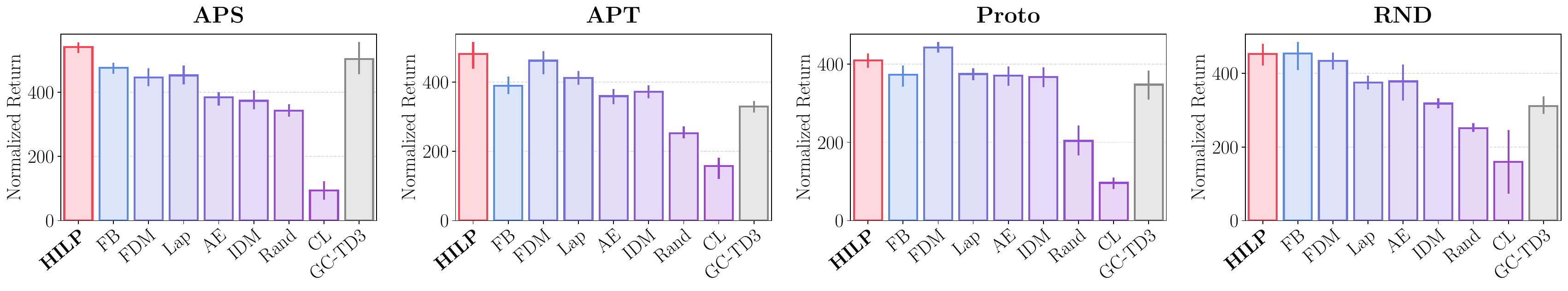}
    \vspace{-10pt}
    \caption{
    \footnotesize
    \textbf{Per-dataset performances on the state-based ExORL benchmark.}
    The results are aggregated over $4$ environments, $4$ tasks, and $4$ seeds (\ie, $64$ values in total).
    }
    \label{fig:zs_urls}
\end{figure}
\begin{figure}
    \centering
    \includegraphics[width=0.95\linewidth]{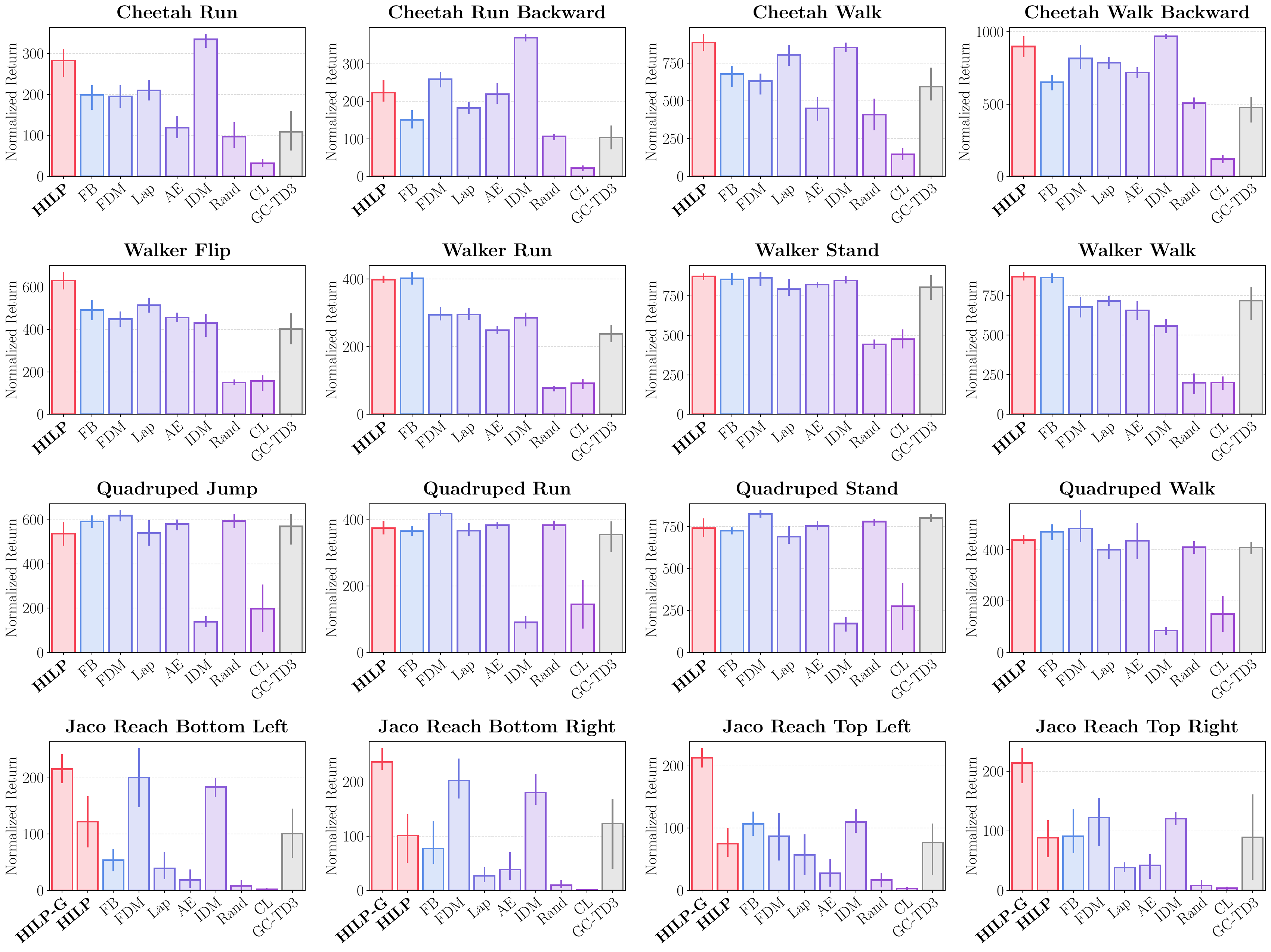}
    \vspace{-10pt}
    \caption{
    \footnotesize
    \textbf{Per-task performances on the state-based ExORL benchmark.}
    The results are aggregated over $4$ datasets and $4$ seeds (\ie, $16$ values in total).
    }
    \label{fig:zs_tasks}
\end{figure}

\clearpage

\begin{table*}
    \caption{
    \footnotesize
    \textbf{Full results on the pixel-based ExORL benchmark ($\mathbf{4}$ seeds).}
    The table shows the \emph{unnormalized} return averaged over four seeds in each setting.
    }
    \vspace{5pt}
    \label{table:exorl_pixel_full}
    \centering
    \scalebox{0.69}{
    \setlength{\tabcolsep}{5pt}
    \begin{tabular}{lllrrr}
    \toprule
    Dataset & Environment & Task & FDM & FB & \textbf{HILP} (\textbf{ours}) \\
    \midrule
\multirow{16}{*}{APS} & \multirow{4}{*}{Walker} & Flip & $158$ {\tiny $\pm 25$} & $66$ {\tiny $\pm 34$} & $127$ {\tiny $\pm 23$} \\
& & Run & $126$ {\tiny $\pm 26$} & $53$ {\tiny $\pm 18$} & $97$ {\tiny $\pm 2$} \\
& & Stand & $608$ {\tiny $\pm 90$} & $340$ {\tiny $\pm 122$} & $520$ {\tiny $\pm 30$} \\
& & Walk & $317$ {\tiny $\pm 156$} & $231$ {\tiny $\pm 109$} & $372$ {\tiny $\pm 68$} \\
\cmidrule{2-6}
& \multirow{4}{*}{Cheetah} & Run & $189$ {\tiny $\pm 41$} & $21$ {\tiny $\pm 34$} & $118$ {\tiny $\pm 113$} \\
& & Run Backward & $59$ {\tiny $\pm 73$} & $24$ {\tiny $\pm 28$} & $248$ {\tiny $\pm 46$} \\
& & Walk & $613$ {\tiny $\pm 77$} & $52$ {\tiny $\pm 91$} & $273$ {\tiny $\pm 383$} \\
& & Walk Backward & $371$ {\tiny $\pm 307$} & $89$ {\tiny $\pm 101$} & $967$ {\tiny $\pm 8$} \\
\cmidrule{2-6}
& \multirow{4}{*}{Quadruped} & Jump & $224$ {\tiny $\pm 23$} & $291$ {\tiny $\pm 34$} & $301$ {\tiny $\pm 31$} \\
& & Run & $172$ {\tiny $\pm 27$} & $231$ {\tiny $\pm 23$} & $204$ {\tiny $\pm 39$} \\
& & Stand & $343$ {\tiny $\pm 16$} & $444$ {\tiny $\pm 53$} & $397$ {\tiny $\pm 47$} \\
& & Walk & $176$ {\tiny $\pm 25$} & $230$ {\tiny $\pm 22$} & $195$ {\tiny $\pm 16$} \\
\cmidrule{2-6}
& \multirow{4}{*}{Jaco} & Reach Bottom Left & $12$ {\tiny $\pm 2$} & $52$ {\tiny $\pm 45$} & $63$ {\tiny $\pm 27$} \\
& & Reach Bottom Right & $29$ {\tiny $\pm 18$} & $53$ {\tiny $\pm 18$} & $68$ {\tiny $\pm 11$} \\
& & Reach Top Left & $21$ {\tiny $\pm 8$} & $31$ {\tiny $\pm 21$} & $62$ {\tiny $\pm 35$} \\
& & Reach Top Right & $36$ {\tiny $\pm 13$} & $53$ {\tiny $\pm 30$} & $62$ {\tiny $\pm 46$} \\
\cmidrule{1-6}
\multirow{16}{*}{APT} & \multirow{4}{*}{Walker} & Flip & $353$ {\tiny $\pm 37$} & $58$ {\tiny $\pm 28$} & $280$ {\tiny $\pm 26$} \\
& & Run & $239$ {\tiny $\pm 34$} & $47$ {\tiny $\pm 17$} & $160$ {\tiny $\pm 25$} \\
& & Stand & $768$ {\tiny $\pm 110$} & $257$ {\tiny $\pm 82$} & $486$ {\tiny $\pm 16$} \\
& & Walk & $504$ {\tiny $\pm 79$} & $66$ {\tiny $\pm 30$} & $514$ {\tiny $\pm 71$} \\
\cmidrule{2-6}
& \multirow{4}{*}{Cheetah} & Run & $251$ {\tiny $\pm 18$} & $28$ {\tiny $\pm 13$} & $326$ {\tiny $\pm 67$} \\
& & Run Backward & $169$ {\tiny $\pm 109$} & $24$ {\tiny $\pm 15$} & $249$ {\tiny $\pm 83$} \\
& & Walk & $737$ {\tiny $\pm 54$} & $107$ {\tiny $\pm 49$} & $880$ {\tiny $\pm 60$} \\
& & Walk Backward & $578$ {\tiny $\pm 137$} & $109$ {\tiny $\pm 67$} & $839$ {\tiny $\pm 99$} \\
\cmidrule{2-6}
& \multirow{4}{*}{Quadruped} & Jump & $270$ {\tiny $\pm 40$} & $187$ {\tiny $\pm 44$} & $207$ {\tiny $\pm 62$} \\
& & Run & $188$ {\tiny $\pm 38$} & $121$ {\tiny $\pm 28$} & $125$ {\tiny $\pm 34$} \\
& & Stand & $375$ {\tiny $\pm 112$} & $242$ {\tiny $\pm 49$} & $267$ {\tiny $\pm 60$} \\
& & Walk & $173$ {\tiny $\pm 36$} & $127$ {\tiny $\pm 41$} & $124$ {\tiny $\pm 32$} \\
\cmidrule{2-6}
& \multirow{4}{*}{Jaco} & Reach Bottom Left & $17$ {\tiny $\pm 9$} & $24$ {\tiny $\pm 24$} & $18$ {\tiny $\pm 6$} \\
& & Reach Bottom Right & $31$ {\tiny $\pm 32$} & $22$ {\tiny $\pm 20$} & $23$ {\tiny $\pm 4$} \\
& & Reach Top Left & $42$ {\tiny $\pm 8$} & $97$ {\tiny $\pm 66$} & $27$ {\tiny $\pm 4$} \\
& & Reach Top Right & $64$ {\tiny $\pm 27$} & $37$ {\tiny $\pm 32$} & $39$ {\tiny $\pm 24$} \\
\cmidrule{1-6}
\multirow{16}{*}{Proto} & \multirow{4}{*}{Walker} & Flip & $267$ {\tiny $\pm 65$} & $85$ {\tiny $\pm 56$} & $140$ {\tiny $\pm 65$} \\
& & Run & $212$ {\tiny $\pm 44$} & $48$ {\tiny $\pm 19$} & $108$ {\tiny $\pm 35$} \\
& & Stand & $854$ {\tiny $\pm 46$} & $282$ {\tiny $\pm 164$} & $533$ {\tiny $\pm 132$} \\
& & Walk & $563$ {\tiny $\pm 268$} & $88$ {\tiny $\pm 61$} & $347$ {\tiny $\pm 71$} \\
\cmidrule{2-6}
& \multirow{4}{*}{Cheetah} & Run & $87$ {\tiny $\pm 67$} & $11$ {\tiny $\pm 12$} & $116$ {\tiny $\pm 31$} \\
& & Run Backward & $41$ {\tiny $\pm 22$} & $5$ {\tiny $\pm 5$} & $170$ {\tiny $\pm 78$} \\
& & Walk & $150$ {\tiny $\pm 126$} & $32$ {\tiny $\pm 51$} & $410$ {\tiny $\pm 303$} \\
& & Walk Backward & $384$ {\tiny $\pm 232$} & $26$ {\tiny $\pm 31$} & $743$ {\tiny $\pm 267$} \\
\cmidrule{2-6}
& \multirow{4}{*}{Quadruped} & Jump & $182$ {\tiny $\pm 22$} & $150$ {\tiny $\pm 41$} & $210$ {\tiny $\pm 62$} \\
& & Run & $120$ {\tiny $\pm 21$} & $98$ {\tiny $\pm 17$} & $158$ {\tiny $\pm 71$} \\
& & Stand & $226$ {\tiny $\pm 65$} & $181$ {\tiny $\pm 29$} & $293$ {\tiny $\pm 134$} \\
& & Walk & $108$ {\tiny $\pm 39$} & $87$ {\tiny $\pm 21$} & $157$ {\tiny $\pm 50$} \\
\cmidrule{2-6}
& \multirow{4}{*}{Jaco} & Reach Bottom Left & $24$ {\tiny $\pm 22$} & $75$ {\tiny $\pm 29$} & $32$ {\tiny $\pm 19$} \\
& & Reach Bottom Right & $26$ {\tiny $\pm 21$} & $31$ {\tiny $\pm 31$} & $24$ {\tiny $\pm 18$} \\
& & Reach Top Left & $43$ {\tiny $\pm 25$} & $47$ {\tiny $\pm 16$} & $38$ {\tiny $\pm 10$} \\
& & Reach Top Right & $48$ {\tiny $\pm 21$} & $60$ {\tiny $\pm 37$} & $66$ {\tiny $\pm 19$} \\
\cmidrule{1-6}
\multirow{16}{*}{RND} & \multirow{4}{*}{Walker} & Flip & $282$ {\tiny $\pm 52$} & $62$ {\tiny $\pm 57$} & $232$ {\tiny $\pm 41$} \\
& & Run & $146$ {\tiny $\pm 60$} & $42$ {\tiny $\pm 25$} & $126$ {\tiny $\pm 8$} \\
& & Stand & $557$ {\tiny $\pm 99$} & $172$ {\tiny $\pm 111$} & $496$ {\tiny $\pm 73$} \\
& & Walk & $452$ {\tiny $\pm 52$} & $104$ {\tiny $\pm 82$} & $376$ {\tiny $\pm 52$} \\
\cmidrule{2-6}
& \multirow{4}{*}{Cheetah} & Run & $178$ {\tiny $\pm 41$} & $221$ {\tiny $\pm 15$} & $276$ {\tiny $\pm 46$} \\
& & Run Backward & $126$ {\tiny $\pm 9$} & $171$ {\tiny $\pm 123$} & $297$ {\tiny $\pm 46$} \\
& & Walk & $470$ {\tiny $\pm 182$} & $535$ {\tiny $\pm 251$} & $895$ {\tiny $\pm 33$} \\
& & Walk Backward & $441$ {\tiny $\pm 107$} & $535$ {\tiny $\pm 440$} & $927$ {\tiny $\pm 35$} \\
\cmidrule{2-6}
& \multirow{4}{*}{Quadruped} & Jump & $273$ {\tiny $\pm 66$} & $224$ {\tiny $\pm 149$} & $244$ {\tiny $\pm 122$} \\
& & Run & $192$ {\tiny $\pm 29$} & $158$ {\tiny $\pm 82$} & $148$ {\tiny $\pm 19$} \\
& & Stand & $374$ {\tiny $\pm 85$} & $347$ {\tiny $\pm 191$} & $327$ {\tiny $\pm 126$} \\
& & Walk & $199$ {\tiny $\pm 63$} & $162$ {\tiny $\pm 92$} & $163$ {\tiny $\pm 45$} \\
\cmidrule{2-6}
& \multirow{4}{*}{Jaco} & Reach Bottom Left & $20$ {\tiny $\pm 14$} & $62$ {\tiny $\pm 16$} & $25$ {\tiny $\pm 3$} \\
& & Reach Bottom Right & $16$ {\tiny $\pm 8$} & $49$ {\tiny $\pm 29$} & $31$ {\tiny $\pm 11$} \\
& & Reach Top Left & $40$ {\tiny $\pm 16$} & $53$ {\tiny $\pm 7$} & $29$ {\tiny $\pm 4$} \\
& & Reach Top Right & $38$ {\tiny $\pm 21$} & $70$ {\tiny $\pm 17$} & $35$ {\tiny $\pm 13$} \\
    \bottomrule
    \end{tabular}
    }
\end{table*}

\clearpage

\begin{figure}
    \centering
    \includegraphics[width=0.95\linewidth]{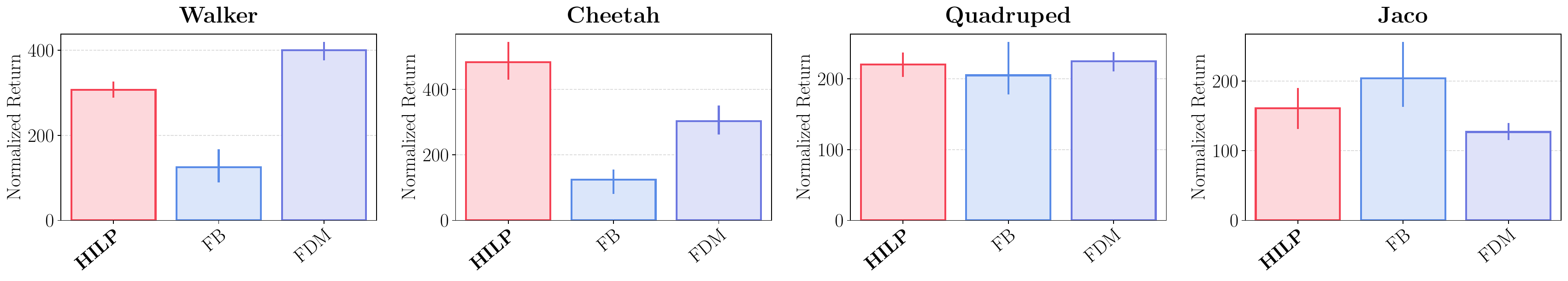}
    \vspace{-10pt}
    \caption{
    \footnotesize
    \textbf{Per-environment performances on the pixel-based ExORL benchmark.}
    The results are aggregated over $4$ tasks, $4$ datasets, and $4$ seeds (\ie, $64$ values in total).
    }
    \label{fig:zs_pixel_envs2}
\end{figure}
\begin{figure}
    \centering
    \includegraphics[width=0.95\linewidth]{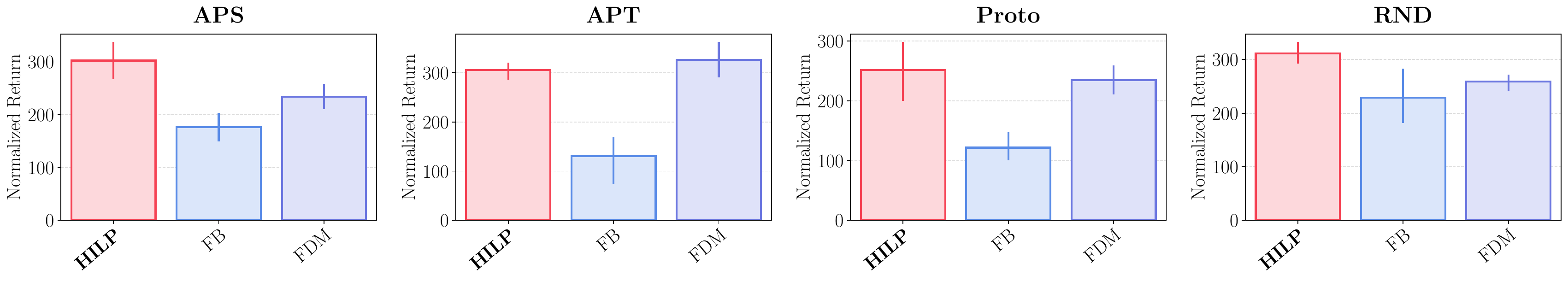}
    \vspace{-10pt}
    \caption{
    \footnotesize
    \textbf{Per-dataset performances on the pixel-based ExORL benchmark.}
    The results are aggregated over $4$ environments, $4$ tasks, and $4$ seeds (\ie, $64$ values in total).
    }
    \label{fig:zs_pixel_urls}
\end{figure}
\begin{figure}
    \centering
    \includegraphics[width=0.95\linewidth]{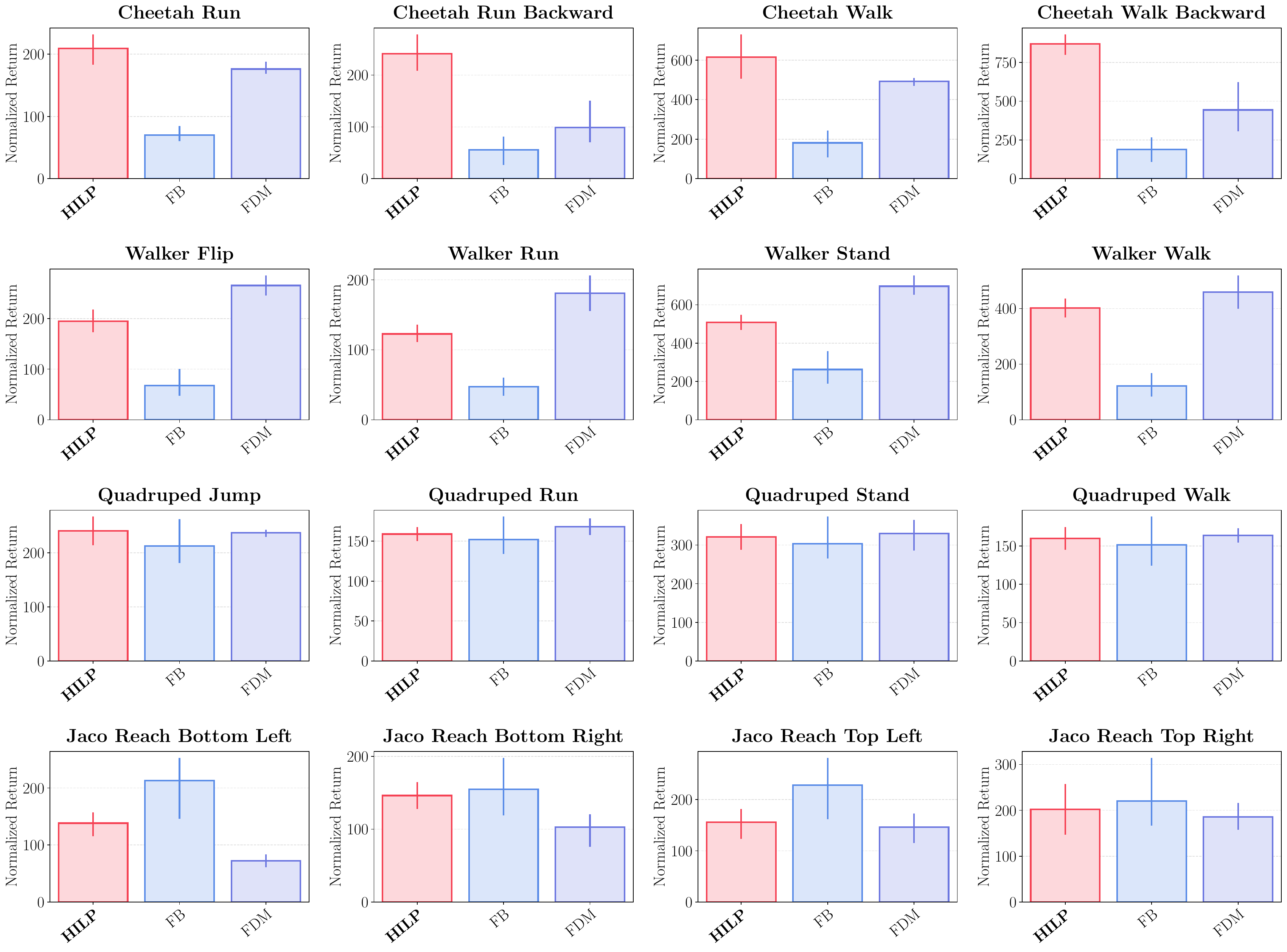}
    \vspace{-10pt}
    \caption{
    \footnotesize
    \textbf{Per-task performances on the pixel-based ExORL benchmark.}
    The results are aggregated over $4$ datasets and $4$ seeds (\ie, $16$ values in total).
    }
    \label{fig:zs_pixel_tasks}
\end{figure}

\clearpage

\section{Theoretical Results}
\label{sec:theory}

Let $d^*: \gS \times \gS \to \sR$ be the optimal temporal distance function.
Let $\phi: \gS \to \gZ$ be a representation function that maps states into a real Hilbert space
with an inner product $\langle \cdot, \cdot \rangle$ and its induced norm $\| \cdot \|$.
The role of $\phi$ is
to embed temporal distances into the latent Hilbert space such that $d^*(s, g) \approx \|\phi(s) - \phi(g)\|$.
Since we assume a deterministic MDP,
we denote the transition dynamics function and policies as deterministic functions:
$p: \gS \times \gA \to \gS$ and $\pi: \gS \to \gA$.
In this section, we will show that moving in the direction of $\phi(g) - \phi(s)$
is optimal to reach the goal $g$ from the state $s$ if embedding errors are sufficiently small.

For a state $s$ and a goal $g$, we define the following functions:
\begin{align}
z'^*(s, g) &:= \phi(s) + \frac{\phi(g) - \phi(s)}{\|\phi(g) - \phi(s)\|}, \\
\hat \pi(s, g) &:= \argmax_{a \in \gA}
    \left\langle \phi(s') - \phi(s), \frac{\phi(g) - \phi(s)}{\|\phi(g) - \phi(s)\|} \right\rangle \quad \mathrm{s.t.} \quad s' = p(s, a), \  \|\phi(s) - \phi(s')\| \leq 1, \label{eq:gc_bound_policy} \\
\hat z'(s, g) &:= \phi(p(s, \hat \pi(s, g))).
\end{align}
We denote the neighborhood states of $s$ as $N(s) := \{p(s, a): a \in \gA\}$.
Intuitively, $z'^*(s, g)$ is the optimal latent point
that is a unit distance away in the goal direction from the current latent state,
and $\hat z'(s, g)$ is the optimal next latent state that maximizes the directional reward $\langle \phi(s') - \phi(s), (\phi(g) - \phi(s)) / \|\phi(g) - \phi(s)\| \rangle$,

The following theorem states a condition for the policy $\hat \pi(s, g)$ to be optimal at $(s, g)$.

\begin{theorem}
\label{thm:gc_bound}
For a state-goal pair $(s, g) \in \gS \times \gS$, $s \neq g$,
assume that the local embedding error is bounded as $\sup_{s' \in N(s) \cup \{s\}} |d^*(s', g) - \|\phi(s') - \phi(g)\|| \leq \eps_e$
and the directional movement error is bounded as $\|z'^*(s, g) - \hat z'(s, g)\| \leq \eps_d$.
If $\, 4\eps_e + \eps_d < 1$,
$\hat \pi(s, g)$ is guaranteed to be an optimal action at $(s, g)$.
\end{theorem}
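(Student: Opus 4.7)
Let $\hat a := \hat\pi(s, g)$ and $\hat s' := p(s, \hat a)$, so that $\hat z'(s, g) = \phi(\hat s')$. The plan is to show $d^*(\hat s', g) = d^*(s, g) - 1$, since this is exactly the condition that $\hat a$ is an optimal first action from $s$ toward $g$ in a deterministic, unit-cost MDP. The lower bound $d^*(\hat s', g) \ge d^*(s, g) - 1$ is immediate, as a single transition can reduce the temporal distance to the goal by at most one, so only the upper bound requires work. Throughout I would set $r := \|\phi(g) - \phi(s)\|$ and $u := (\phi(g) - \phi(s))/r$, so that $z'^*(s, g) = \phi(s) + u$.

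\textbf{Chaining the two error bounds.} The main estimate would run
\begin{align*}
d^*(\hat s', g)
&\le \|\phi(\hat s') - \phi(g)\| + \eps_e \\
&\le \|z'^*(s, g) - \phi(g)\| + \|\hat z'(s, g) - z'^*(s, g)\| + \eps_e \\
&\le \|z'^*(s, g) - \phi(g)\| + \eps_d + \eps_e,
\end{align*}
where the first line uses the local embedding bound at $\hat s' \in N(s) \cup \{s\}$, and the remaining lines use the triangle inequality together with the directional-error hypothesis.

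\textbf{Algebraic identity.} A short computation then reduces $\|z'^*(s, g) - \phi(g)\|$ to $|1 - r|$: expanding, $z'^*(s, g) - \phi(g) = (\phi(s) - \phi(g)) + u = (1/r - 1)(\phi(g) - \phi(s))$, whose norm is $|1 - r|$. Combining with the embedding bound $|r - d^*(s, g)| \le \eps_e$ applied to the pair $(s, g)$, a brief case split, namely the base case $d^*(s, g) = 1$ where $|1 - r| \le \eps_e$ directly, versus $d^*(s, g) \ge 2$ where $r \ge 1$ and $|1 - r| = r - 1 \le d^*(s, g) - 1 + \eps_e$, yields $|1 - r| \le d^*(s, g) - 1 + \eps_e$ in all cases.

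\textbf{Integrality and conclusion.} Substituting back gives $d^*(\hat s', g) \le d^*(s, g) - 1 + 2\eps_e + \eps_d$. Because $d^*$ takes nonnegative integer values and $4\eps_e + \eps_d < 1$ certainly implies $2\eps_e + \eps_d < 1$, combining this upper bound with the trivial lower bound $d^*(\hat s', g) \ge d^*(s, g) - 1$ forces equality, so $\hat a$ is optimal. I expect the main obstacle to be the clean algebraic evaluation of $\|z'^*(s, g) - \phi(g)\|$ and the careful treatment of the boundary case $d^*(s, g) = 1$, where $r$ may dip slightly below $1$ so that $z'^*(s, g)$ overshoots $\phi(g)$; the factor of four in the hypothesis appears to be a comfortable cushion that also absorbs any slack needed to guarantee that the unit-ball constraint in the definition of $\hat\pi$ has not thrown away an optimal action, a fact that is implicitly folded into the directional-error bound $\eps_d$.
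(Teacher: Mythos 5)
Your proposal is correct, and it reaches the conclusion by a decomposition that differs from the paper's in a way worth noting. The paper bounds the \emph{absolute} deviation $\bigl|d^*(\hat s', g) - (d^*(s,g)-1)\bigr|$ symmetrically: it inserts two embedding-error terms (one at $\hat s'$, one at $s$), uses the feasibility constraint $\|\phi(s)-\phi(\hat s')\|\le 1$ to drop an absolute value, and then splits on whether $\|\phi(g)-\phi(s)\|\ge 1$, arriving at the bound $4\eps_e+\eps_d$ before invoking integrality. You instead observe that the lower bound $d^*(\hat s',g)\ge d^*(s,g)-1$ is free from the deterministic MDP structure, so only the upper bound needs the representation; chaining $d^*(\hat s',g)\le \|\phi(\hat s')-\phi(g)\|+\eps_e\le \|z'^*(s,g)-\phi(g)\|+\eps_d+\eps_e$ and evaluating $\|z'^*(s,g)-\phi(g)\|=|1-r|\le d^*(s,g)-1+\eps_e$ via your case split on $d^*(s,g)=1$ versus $d^*(s,g)\ge 2$ gives the sharper condition $2\eps_e+\eps_d<1$, which the stated hypothesis comfortably implies. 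Your route also never needs the unit-ball constraint in the definition of $\hat\pi$ for the main estimate (your closing remark about that constraint is not load-bearing, since the argument applies to whatever action $\hat\pi$ returns, given the assumed bound on $\|z'^*-\hat z'\|$). The trade-off is that the paper's symmetric version bounds the deviation in both directions purely through the representation, which is perhaps more uniform in style, whereas yours buys a factor-of-two improvement in the $\eps_e$ coefficient by exploiting the one inequality that the MDP gives for free. All individual steps check out, including the algebraic identity for $\|z'^*(s,g)-\phi(g)\|$ and the use of $\eps_e<1/4$ to ensure $r>1$ when $d^*(s,g)\ge 2$.
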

\begin{proof}
Define $\hat s' := p(s, \hat \pi(s, g))$.
Since $s \neq g$, we know $d^*(s, g) \geq 1$.
To show that $\hat \pi(s, g)$ is an optimal action,
it suffices to show that the temporal distance toward the goal is reduced by $1$ when the agent moves from $s$ to $\hat s'$.
We bound the difference between $d^*(\hat s', g)$ and $d^*(s, g) - 1$ as follows:
\begin{align}
&\left|d^*(\hat s', g) - (d^*(s, g) - 1)\right| \\
\leq& \left|d^*(\hat s', g) - \|\phi(\hat s') - \phi(g)\|\right| + \left|\|\phi(\hat s') - \phi(g)\| - (\|\phi(s) - \phi(g)\| - 1)\right|
    + \left|(\|\phi(s) - \phi(g)\| - 1) - (d^*(s, g) - 1)\right| \\
\leq& \eps_e + \left|\|\phi(\hat s') - \phi(g)\| - (\|\phi(s) - \phi(g)\| - 1)\right| + \eps_e \\
=& 2\eps_e + \|\phi(\hat s') - \phi(g)\| - (\|\phi(s) - \phi(g)\| - 1) \label{eq:gc_bound1} \\
\leq& 2\eps_e + \|\phi(\hat s') - z'^*(s, g)\| + \|z'^*(s, g) - \phi(g)\| - (\|\phi(s) - \phi(g)\| - 1), \label{eq:gc_bound2}
\end{align}
where we use
\begin{align}
\|\phi(\hat s') - \phi(g)\| + 1
&\geq \|\phi(\hat s') - \phi(g)\| + \|\phi(s) - \phi(\hat s')\| \\
&\geq \|\phi(s) - \phi(g)\|
\end{align}
for \Cref{eq:gc_bound1}.
To bound \Cref{eq:gc_bound2}, we consider the following two cases.

\textbf{Case \#$\mathbf{1}$: $\|\phi(g) - \phi(s)\| \geq 1$.}
In this case, we have
\begin{align}
\|z'^*(s, g) - \phi(g)\|
&= \left\|\phi(s) + \frac{\phi(g) - \phi(s)}{\|\phi(g) - \phi(s)\|} - \phi(g)\right \| \\
&= \left|\|\phi(g) - \phi(s)\| - 1\right| \\
&= \|\phi(g) - \phi(s)\| - 1.
\end{align}

\textbf{Case \#$\mathbf{2}$: $\|\phi(g) - \phi(s)\| < 1$.}
Similarly, we have
\begin{align}
\|z'^*(s, g) - \phi(g)\|
&= 1 - \|\phi(g) - \phi(s)\|,
\end{align}
and thus
\begin{align}
&\|z'^*(s, g) - \phi(g)\| - (\|\phi(s) - \phi(g)\| - 1) \\
=& 2(1 - \|\phi(s) - \phi(g)\|) \\
\leq& 2(d^*(s, g) - \|\phi(s) - \phi(g)\|) \\
\leq& 2\eps_e.
\end{align}

As $\|z'^*(s, g) - \phi(g)\| - (\|\phi(s) - \phi(g)\| - 1)$ is bounded by $2\eps_e$ in both cases,
we have
\begin{align}
\left|d^*(\hat s', g) - (d^*(s, g) - 1)\right| &\leq 2\eps_e + \|\phi(\hat s') - z'^*(s, g)\| + \|z'^*(s, g) - \phi(g)\| - (\|\phi(s) - \phi(g)\| - 1) \\
&\leq 4\eps_e + \|\phi(\hat s') - z'^*(s, g)\| \\
&\leq 4\eps_e + \eps_d.
\end{align}

Since we have $4\eps_e + \eps_d < 1$ and $|d^*(\hat s', g) - (d^*(s, g) - 1)|$ is an integer,
$|d^*(\hat s', g) - (d^*(s, g) - 1)|$ must be zero and thus $\hat \pi(s, g)$ is an optimal action.
\end{proof}

A keen reader may notice that \Cref{thm:gc_bound} only depends on $z'^*(s, g)$ and $\hat z'(s, g)$,
and is agnostic to the \emph{objective} of the policy (\Cref{eq:gc_bound_policy}),
$\langle \phi(s') - \phi(s), (\phi(g) - \phi(s))/\|\phi(g) - \phi(s)\|\rangle$.
The following theorem justifies this directional objective:
namely, the policy objective in \Cref{eq:gc_bound_policy} finds the optimal next latent state $z'^*(s, g)$
if it is a feasible point.
\begin{theorem}
If $z'^*(s, g) \in \{\phi(s'): s' \in N(s), \|\phi(s) - \phi(s')\| \leq 1\}$, then $\hat z'(s, g) = z'^*(s, g)$.
\end{theorem}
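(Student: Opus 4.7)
The plan is to recognize that the policy objective in the definition of $\hat\pi(s,g)$ is a constrained inner product with a unit vector, so the whole statement reduces to a Cauchy--Schwarz equality argument. Set $u := (\phi(g)-\phi(s))/\|\phi(g)-\phi(s)\|$, which is a unit vector in $\gZ$, and note that $z'^*(s,g) = \phi(s) + u$. The feasible set for $\hat\pi(s,g)$ is $F := \{\phi(s') : s' \in N(s),\ \|\phi(s')-\phi(s)\| \leq 1\}$, and the objective being maximized is the linear functional $v \mapsto \langle v - \phi(s), u\rangle$.

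First, I would establish a uniform upper bound on this objective over $F$ using Cauchy--Schwarz: for any $s' \in N(s)$ with $\|\phi(s')-\phi(s)\|\leq 1$,
\begin{equation*}
\langle \phi(s')-\phi(s),\, u\rangle \;\leq\; \|\phi(s')-\phi(s)\|\cdot\|u\| \;\leq\; 1.
\end{equation*}
Second, I would use the hypothesis $z'^*(s,g) \in F$: there exists an action $a^*$ with $p(s,a^*) = s^*$ such that $\phi(s^*) = z'^*(s,g) = \phi(s) + u$. For this $s^*$ we have $\|\phi(s^*)-\phi(s)\| = \|u\| = 1$, which is feasible, and the objective value is $\langle u, u\rangle = 1$. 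Hence the supremum over $F$ equals $1$ and is attained, so the argmax defining $\hat\pi(s,g)$ is nonempty and achieves value $1$.

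Third, I would characterize every maximizer. Any $a$ with $s' = p(s,a)$ realizing the maximum must satisfy $\langle \phi(s')-\phi(s),u\rangle = 1$, which combined with the two inequalities above forces both $\|\phi(s')-\phi(s)\| = 1$ and the Cauchy--Schwarz equality condition. In a real Hilbert space, equality in Cauchy--Schwarz with a unit vector $u$ and a vector of norm one forces $\phi(s')-\phi(s) = u$, that is $\phi(s') = \phi(s) + u = z'^*(s,g)$. Applying this to $\hat\pi(s,g)$ yields $\hat z'(s,g) = \phi(p(s,\hat\pi(s,g))) = z'^*(s,g)$, as required.

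The main subtlety to watch is that $\hat\pi(s,g)$ is defined by an $\argmax$ that need not be unique: several distinct actions could attain the maximum value $1$. The argument above handles this cleanly, since the equality case of Cauchy--Schwarz pins down $\phi(s')$ uniquely even though $s'$ (or $a$) may not be unique, so $\hat z'(s,g)$ is a well-defined point equal to $z'^*(s,g)$. No additional regularity or continuity assumption on $\phi$ or $p$ is needed; the conclusion follows directly from the Hilbert space geometry.
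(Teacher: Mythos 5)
Your proof is correct and follows essentially the same route as the paper's: a Cauchy--Schwarz upper bound of $1$ on the objective over the feasible set, combined with the observation that the feasible point $z'^*(s,g)$ attains value $\langle u,u\rangle = 1$. Your third step, invoking the equality case of Cauchy--Schwarz to show that \emph{every} maximizer $s'$ satisfies $\phi(s') = z'^*(s,g)$, is a welcome refinement the paper omits, since it makes $\hat z'(s,g)$ well defined even when the $\argmax$ over actions is not unique.
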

\begin{proof}
Recall that $\hat \pi(s, g)$ is defined as
\begin{align}
\hat \pi(s, g) = \argmax_{a \in \gA}
    \left\langle \phi(s') - \phi(s), \frac{\phi(g) - \phi(s)}{\|\phi(g) - \phi(s)\|} \right\rangle \quad \mathrm{s.t.} \quad s' = p(s, a), \  \|\phi(s) - \phi(s')\| \leq 1. \\
\end{align}
We have
\begin{align}
&\max_{s' \in N(s)}
    \left\langle \phi(s') - \phi(s), \frac{\phi(g) - \phi(s)}{\|\phi(g) - \phi(s)\|} \right\rangle \quad \mathrm{s.t.} \quad \|\phi(s) - \phi(s')\| \leq 1 \label{eq:reachable} \\
\leq &\max_{s' \in \gS}
    \left\langle \phi(s') - \phi(s), \frac{\phi(g) - \phi(s)}{\|\phi(g) - \phi(s)\|} \right\rangle \quad \mathrm{s.t.} \quad \|\phi(s) - \phi(s')\| \leq 1 \\
\leq &\max_{s' \in \gS}
    \| \phi(s') - \phi(s) \| \left \| \frac{\phi(g) - \phi(s)}{\|\phi(g) - \phi(s)\|} \right\| \quad \mathrm{s.t.} \quad \|\phi(s) - \phi(s')\| \leq 1 \\
\leq &1,
\end{align}
by the Cauchy-Schwarz inequality.
By the assumption, $z'^*(s, g)$ is a feasible point
and
\begin{align}
&\left\langle z'^*(s, g) - \phi(s), \frac{\phi(g) - \phi(s)}{\|\phi(g) - \phi(s)\|} \right\rangle \\
=&\left\langle \phi(s) + \frac{\phi(g) - \phi(s)}{\|\phi(g) - \phi(s)\|} - \phi(s), \frac{\phi(g) - \phi(s)}{\|\phi(g) - \phi(s)\|} \right\rangle \\
=&\left\langle \frac{\phi(g) - \phi(s)}{\|\phi(g) - \phi(s)\|}, \frac{\phi(g) - \phi(s)}{\|\phi(g) - \phi(s)\|} \right\rangle \\
=&1
\end{align}
holds.
Hence, the maximum in \Cref{eq:reachable} is attainable by $z'^*(s, g)$,
and thus $\hat z'(s, g) = z'^*(s, g)$ holds.
\end{proof}

Finally, as a corollary to \Cref{thm:gc_bound}, we have the following:
\begin{corollary}
\label{thm:gc_total_bound}
If embedding errors are bounded as $\sup_{s, g \in \gS} |d^*(s, g) - \|\phi(s) - \phi(g)\|| \leq \eps_e$,
directional movement errors are bounded as $\sup_{s, g \in \gS} \|z'^*(s, g) - \hat z'(s, g)\| \leq \eps_d$,
and $\, 4\eps_e + \eps_d < 1$,
then $\hat \pi(s, g)$ is an optimal goal-reaching policy.
\end{corollary}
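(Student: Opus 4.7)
This corollary is the uniform, global analogue of Theorem~\ref{thm:gc_bound}. My plan is to first observe that the assumed global bounds trivially specialize to the pointwise bounds that Theorem~\ref{thm:gc_bound} needs at every non-goal state-goal pair, and then to lift the resulting pointwise action-optimality to optimality of the rolled-out policy by induction on the temporal distance $d^*(s_t, g)$ along the trajectory.

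\textbf{Step 1 (Reduction to Theorem~\ref{thm:gc_bound}).} Fix any $(s, g) \in \gS \times \gS$ with $s \neq g$. The hypothesis $\sup_{s, g \in \gS} |d^*(s, g) - \|\phi(s) - \phi(g)\|| \leq \eps_e$ immediately implies the restricted local bound $\sup_{s' \in N(s) \cup \{s\}} |d^*(s', g) - \|\phi(s') - \phi(g)\|| \leq \eps_e$, and the global directional bound likewise implies its pointwise counterpart at $(s, g)$. Since $4\eps_e + \eps_d < 1$ is assumed, all hypotheses of Theorem~\ref{thm:gc_bound} are in force at $(s, g)$, so $\hat \pi(s, g)$ is an optimal action there; equivalently, $d^*(p(s, \hat \pi(s, g)), g) = d^*(s, g) - 1$.

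\textbf{Step 2 (Rollout and induction).} Given any initial state $s_0$, define the trajectory $s_{t+1} := p(s_t, \hat \pi(s_t, g))$. I claim that $d^*(s_t, g) = d^*(s_0, g) - t$ for all $0 \leq t \leq d^*(s_0, g)$, by induction on $t$. The base case is immediate. For the inductive step, if $t < d^*(s_0, g)$ then $d^*(s_t, g) \geq 1$, so in particular $s_t \neq g$, which lets me invoke Step~1 at $(s_t, g)$ to conclude $d^*(s_{t+1}, g) = d^*(s_t, g) - 1$. Setting $t = d^*(s_0, g)$ yields $s_t = g$, and since by definition no policy can reach $g$ from $s_0$ in fewer than $d^*(s_0, g)$ steps, $\hat \pi$ attains the optimal goal-reaching horizon from every initial state.

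\textbf{Main obstacle.} There is essentially no technical obstacle here: all the real work is done inside Theorem~\ref{thm:gc_bound}, and the corollary is a straightforward specialization plus an inductive rollout. The only mild care required is to fix a precise notion of an ``optimal goal-reaching policy'' (namely, one whose rollout from any $s_0$ produces a trajectory of length exactly $d^*(s_0, g)$ that ends at $g$) and to note that the induction only needs to be run for $t \leq d^*(s_0, g)$, so I never have to reason about what $\hat \pi$ does once the goal has been reached.
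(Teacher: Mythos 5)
Your proposal is correct and matches the paper's intent: the paper states this result as an immediate corollary of \Cref{thm:gc_bound} without writing out the argument, and your two steps (specializing the uniform bounds to the pointwise hypotheses of \Cref{thm:gc_bound}, then inducting along the rollout so that $d^*(s_t,g)$ decreases by exactly one per step until the goal is reached) are precisely the routine details being elided. No gaps.
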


Intuitively, \Cref{thm:gc_total_bound} tells us that if the embedding error is small enough,
directional movements in the latent space are optimal for solving goal-reaching tasks.

\textbf{Limitations.}
One natural question to ask is whether it is always possible to embed any MDP into a Hilbert space up to arbitrary accuracy.
Unfortunately, this is not always possible.
First, temporal distances are asymmetric but the distance metric of the Hilbert space $\gZ$ is symmetric.
Second, even when the environment is completely symmetric,
there exists a symmetric MDP that is not embeddable
into a Hilbert space with an arbitrarily low approximation error~\citep{dist_indyk2017,dist_pitis2020}.
This is mainly because Hilbert spaces are highly structured,
especially compared to metric or quasimetric spaces, which do not require a well-defined inner product.
Nonetheless, the inner product structure of the Hilbert space naturally enables useful prompting strategies
for zero-shot RL and goal-conditioned RL,
and we empirically found that even MDPs that in principle do not have a lossless Hilbert representation
can still be solved effectively via our method in our experiments.
We believe finding a way to relax the Hilbert condition while having similar prompting and planning strategies
is an exciting and important future research direction.

\section{Experimental Details}
\label{sec:exp_detail}

We implement HILPs based on two different codebases:
the official implementation of FB representations~\citep{zs_touati2023} for zero-shot RL experiments
and that of HIQL~\citep{hiql_park2023} for offline goal-conditioned RL and hierarchical RL experiments.
Our implementations are publicly available at the following repository: \hilpcode.
We run our experiments on an internal cluster consisting of A5000 GPUs.
Each run in this work takes no more than $28$ hours.

\subsection{Environments and Datasets}

\textbf{ExORL~\citep{exorl_yarats2022}.}
The ExORL benchmark consists of a set of datasets collected by unsupervised RL agents~\citep{urlb_laskin2021}
on the DeepMind Control Suite~\citep{dmc_tassa2018}.
We use four environments (Walker (\Cref{fig:envs}a), Cheetah (\Cref{fig:envs}b),
Quadruped (\Cref{fig:envs}c), and Jaco (\Cref{fig:envs}d))
and four datasets collected by APS~\citep{aps_liu2021}, APT~\citep{apt_liu2021},
Proto~\citep{protorl_yarats2021}, and RND~\citep{rnd_burda2019} in each environment.
Following \citet{zs_touati2023}, we use the first $5$M transitions from each dataset.
Each environment has four test-time tasks:
Walker has Flip, Run, Stand, and Walk;
Cheetah has Run, Run Backward, Walk, and Walk Backward;
Quadruped has Jump, Run, Stand, Walk;
Jaco has Reach Bottom Left, Reach Bottom Right, Reach Top Left, and Reach Top Right.
Among the four environments,
Walker, Cheetah, and Quadruped have a maximum return of $1000$,
and Jaco has a maximum return of $250$.
As such, we multiply Jaco returns by $4$ to normalize them for aggregation.
For pixel-based ExORL experiments,
we convert each state in the datasets into a $64 \times 64 \times 3$-sized camera image by rendering it.

\textbf{AntMaze~\citep{d4rl_fu2020}.}
The AntMaze datasets from D4RL~\citep{d4rl_fu2020} consist of trajectories of a quadrupedal robot
navigating through a maze from random locations to other locations.
We employ the two most challenging datasets with the largest maze (``antmaze-large-\{diverse, play\}-v2'', \Cref{fig:envs}e)
from the original D4RL benchmark,
and two even larger settings (``antmaze-ultra-\{diverse, play\}-v0'', \Cref{fig:envs}f) introduced by \citet{tap_jiang2023},
where the ``ultra'' maze is twice the size of the ``large'' maze.
For goal-conditioned RL experiments in \Cref{sec:exp_zs_gcrl},
we use the same goal-conditioned evaluation setting as \citet{hiql_park2023}:
we specify the test-time goal $g$ by concatenating the $x$-$y$ coordinates of the original target goal
to the proprioceptive state dimensions of the first observation in the dataset.
The agent gets a reward of $1$ when it reaches the target goal.
In \Cref{table:gcrl,table:hrl}, we multiply the returns by $100$ to normalize them.
For hierarchical RL experiments in \Cref{sec:exp_hrl}, we use the original non-goal-conditioned tasks.

\textbf{Kitchen~\citep{ril_gupta2019,d4rl_fu2020}.}
The Kitchen datasets from D4RL~\citep{d4rl_fu2020} consist of trajectories
of a robotic arm manipulating different kitchen objects in various orders
in the Kitchen environment~\citep{ril_gupta2019} (\Cref{fig:envs}g).
We employ two datasets (``kitchen-\{partial, mixed\}-v0'') from the original D4RL benchmark.
For goal-conditioned RL experiments in \Cref{sec:exp_zs_gcrl},
we use the same goal-conditioned evaluation setting as \citet{hiql_park2023}:
we specify the test-time goal $g$ by concatenating the proprioceptive state dimensions of the first observation in the dataset
to the object states of the target goal given by the environment.
The agent gets a reward of $1$ whenever it achieves a subtask,
where each task consists of a total of four subtasks.
In \Cref{table:gcrl,table:hrl}, we multiply the returns by $25$ to normalize them.
For hierarchical RL experiments in \Cref{sec:exp_hrl}, we use the original non-goal-conditioned tasks.
For pixel-based Kitchen experiments,
we convert each state in the datasets into a $64 \times 64 \times 3$-sized camera image by rendering it.
We use the same camera configuration as \citet{lexa_mendonca2021,metra_park2024} (\Cref{fig:envs}g).

\subsection{Implementation Details}

\textbf{Hilbert representations.}
In \Cref{eq:obj_phi},
we use the same goal relabeling strategy as \citet{hiql_park2023} except that we do not set $g = s$,
since $V(s, s) = 0$ is always guaranteed in our parameterization.
Namely, we sample $g$ either from a geometric distribution over the future states within the same trajectory (with probability $0.625$),
or uniformly from the dataset (with probability $0.375$).
We note that the values $0.625$ and $0.375$ come from the original hyperparameters used by \citet{hiql_park2023}
(which use $g=s$ with probability $0.2$, future states with probability $0.5$, and random states with probability $0.3$),
where we redistribute the unnecessary probability mass of $g=s$ across the other two bins.
To avoid numerical issues with gradient descent, we add a small value ($\eps = 10^{-6}$)
when computing $\|\phi(s) - \phi(g)\|$.

\textbf{Zero-shot RL (\Cref{sec:exp_zs_rl}).}
We evaluate HILPs and all baselines on the same codebase built on
the official implementation of the work by \citet{zs_touati2023}.
For HILP, we use the centered reward function introduced in \Cref{sec:policy}
and the zero-shot prompting scheme introduced in \Cref{sec:zs_rl}.
For HILP-G in Jaco, we use the reward function in \Cref{eq:hilp_reward}
and the goal-conditioned prompting scheme introduced in \Cref{sec:zs_gcrl},
where the goal is specified as the state with the highest reward value from the offline dataset.
For the FB, SF, and GCRL baselines, we follow the implementations provided by \citet{zs_touati2023}.
We use TD3~\citep{td3_fujimoto2018} as the base (offline) RL algorithm to train these methods.
Following \citet{zs_touati2023},
for the HILP, FB, and SF methods,
we either sample a latent vector $z$ uniformly from the prior distribution (with probability $0.5$)
or set $z$ to the latent vector that corresponds to a goal-reaching task (with probability $0.5$).
For successor feature losses, we use either the vector loss or the Q loss~\citep{usf_ma2020},
depending on the environment.
For hyperparameter tuning,
we individually tune HILP, FB, Lap (the best SF method reported in the work by \citet{zs_touati2023}), and GC-TD3
in each environment with the RND dataset,
and apply the found hyperparameters to the other datasets and to the other methods in the same category.
We report the full list of the hyperparameters used in our zero-shot RL experiments in \Cref{table:hyp_zs_rl}.

\textbf{Offline goal-conditioned RL (\Cref{sec:exp_zs_gcrl}).}
We implement HILP on top of the official codebase of the work by \citet{hiql_park2023}.
For HILP, we use the reward function in \Cref{eq:hilp_reward}
and the goal-conditioned prompting scheme introduced in \Cref{sec:zs_gcrl}.
We use IQL~\citep{iql_kostrikov2022} with AWR~\citep{awr_peng2019} as an offline algorithm to train policies.
For HILP-Plan, at each evaluation epoch,
we first randomly sample $N=50000$ states $w_1, w_2, \dots, w_N$ from the dataset $\gD$,
and pre-compute their representations $\phi(w_1), \phi(w_2), \dots, \phi(w_N)$.
Then, at every time step, we find the $\argmin$ of \Cref{eq:plan} over the $N$ samples
using the pre-computed representations.
In practice, we use the average of the $50$ $\argmin$ representations, as we found this to lead to better performance.
For GC-IQL and GC-BC, we use the implementations provided by \citet{hiql_park2023}.
They are implemented on the same codebase as HILP.
For GC-CQL, we modify the JaxCQL repository~\citep{jaxcql_geng2022}
to make it compatible with our goal-conditioned setting.
We mostly follow the hyperparameters used by \citet{calql_nakamoto2023}.
We use the same goal relabeling strategy as \citet{hiql_park2023} for all three goal-conditioned RL methods.
For FB, we use the official implementation provided by \citet{zs_touati2023},
where we additionally implement D4RL environments.
For SF methods (FDM, Lap, and Rand), we re-implement IQL versions of them on the same codebase as HILP,
as we found these versions to perform better than the original implementations by \citet{zs_touati2023}.
Among FB and SF methods,
we only re-implement SF methods based on IQL,
as FB in its current form is not directly compatible with IQL.
We report the full list of the hyperparameters used in our offline goal-conditioned RL experiments in \Cref{table:hyp_zs_gcrl}.

\textbf{Hierarchical RL (\Cref{sec:exp_hrl}).}
We implement hierarchical HILP and OPAL on top of the official codebase of the work by \citet{hiql_park2023}.
To train a high-level policy $\pi^h(z \mid s)$ on top of our latent-conditioned (low-level) policy $\pi(a \mid s, z)$,
we first sample $(s_t, s_{t+k})$ tuples from the dataset,
label them with \mbox{$z = (\phi(s_{t+k}) - \phi(s_t)) / \|\phi(s_{t+k}) - \phi(s_t)\|$},
and use $z$ as high-level actions.
$k$ is a hyperparameter that determines the high-level action length.
For OPAL,
we use our own implementation on top of the same codebase as HILP,
as we were unable to find the official implementation.
We sample trajectory chunks $(s_{t:t+k}, a_{t:t+k-1})$ from the dataset
and train a trajectory VAE consisting of three components:
a trajectory encoder $p(z \mid s_{t:t+k}, a_{t:t+k-1})$ modeled by a bi-directional GRU~\citep{gru_cho2014},
a decoder parameterized as $\pi(a_t \mid s_t, z)$,
and a prior $p(z \mid s_t)$.
For both HILP and OPAL, to ensure a fair comparison, we use the same offline RL algorithm (IQL) for high-level policy learning.
We report the full list of the hyperparameters used in our zero-shot hierarchical RL experiments in \Cref{table:hyp_hrl}.

\begin{table}[t]
    \caption{
    \footnotesize
    \textbf{Hyperparameters for zero-shot RL.}
    }
    \label{table:hyp_zs_rl}
    \vspace{5pt}
    \begin{center}
    \begin{tabular}{lc}
        \toprule
        Hyperparameter & Value \\
        \midrule
        \# gradient steps & $10^6$ (state-based), $5 \times 10^5$ (pixel-based) \\
        Learning rate & $0.0005$ ($\phi$), $0.0001$ (others) \\
        Optimizer & Adam~\citep{adam_kingma2015} \\
        Minibatch size & $1024$ (state-based), $512$ (pixel-based) \\
        MLP dimensions & $(512, 512)$ ($\phi$), $(1024, 1024, 1024)$ (others)%
        \tablefootnote{
        Following \citet{zs_touati2023}, for policies $\pi(a \mid s, z)$ and TD3 values $Q(s, a, z)$,
        we process $s$ (or $(s, a)$) and $(s, z)$ separately with $(1024, 512)$-sized MLPs,
        concatenate them together, and then pass another $(1024)$-sized MLP.
        } \\
        TD3 target smoothing coefficient & $0.01$ \\
        TD3 discount factor $\gamma$ & $0.98$ \\
        Latent dimension & $50$ \\
        \# state samples for latent vector inference & $10000$ \\
        Successor feature loss & Q loss (\{HILP, SF\} on \{Quadruped, Jaco\}), vector loss (others) \\
        Hilbert representation discount factor & $0.96$ (Walker), $0.98$ (others) \\
        Hilbert representation expectile & $0.5$ (HILP), $0.9$ (HILP-G) \\
        Hilbert representation target smoothing coefficient & $0.005$ \\
        \bottomrule
    \end{tabular}
    \end{center}
\end{table}

\begin{table}[t]
    \caption{
    \footnotesize
    \textbf{Hyperparameters for offline goal-conditioned RL.}
    }
    \label{table:hyp_zs_gcrl}
    \vspace{5pt}
    \begin{center}
    \begin{tabular}{lc}
        \toprule
        Hyperparameter & Value \\
        \midrule
        \# gradient steps & $10^6$ (AntMaze), $5 \times 10^5$ (Kitchen) \\
        Learning rate & $0.0003$ \\
        Optimizer & Adam~\citep{adam_kingma2015} \\
        Minibatch size & $1024$ (state-based), $256$ (pixel-based) \\
        Value MLP dimensions & $(512, 512, 512)$ \\
        Policy MLP dimensions & \makecell{$(256, 256)$ (\{GC-BC, GC-IQL\} on AntMaze-Large)%
        \tablefootnote{
        We found that $(256, 256)$-sized policy networks lead to better performance than $(512, 512, 512)$-sized ones
        for GC-BC and GC-IQL on AntMaze-Large.
        }, \\ $(512, 512, 512)$ (others)} \\
        Target smoothing coefficient & $0.005$ \\
        Discount factor $\gamma$ & $0.99$ \\
        Latent dimension & $32$ \\
        Hilbert representation discount factor & $0.99$ \\
        Hilbert representation expectile & $0.7$ (Visual Kitchen), $0.95$ (others) \\
        Hilbert representation target smoothing coefficient & $0.005$ \\
        HILP IQL expectile & $0.7$ (Visual Kitchen), $0.9$ (others) \\
        HILP AWR temperature & $10$ \\
        \bottomrule
    \end{tabular}
    \end{center}
\end{table}

\begin{table}[t]
    \caption{
    \footnotesize
    \textbf{Hyperparameters for hierarchical RL.}
    }
    \label{table:hyp_hrl}
    \vspace{5pt}
    \begin{center}
    \begin{tabular}{lc}
        \toprule
        Hyperparameter & Value \\
        \midrule
        \# gradient steps & $5 \times 10^5$ \\
        Learning rate & $0.0003$ \\
        Optimizer & Adam~\citep{adam_kingma2015} \\
        Minibatch size & $1024$ (HILP), $256$ (OPAL, high-level IQL) \\
        Value MLP dimensions & $(512, 512, 512)$ \\
        Policy MLP dimensions & $(512, 512, 512)$ (HILP), $(256, 256)$ (high-level IQL) \\
        Target smoothing coefficient & $0.005$ \\
        Discount factor $\gamma$ & $0.99$ \\
        Latent dimension & $32$ (HILP), $8$ (OPAL)%
        \tablefootnote{
        We found that OPAL works better with $8$-dimensional latent spaces (as in the original work), compared to $32$-dimensional ones (as in this work), especially in AntMaze tasks.
        }
        \\
        Hilbert representation discount factor & $0.99$ \\
        Hilbert representation expectile & $0.95$ (AntMaze), $0.7$ (Kitchen) \\
        Hilbert representation target smoothing coefficient & $0.005$ \\
        HILP IQL expectile & $0.9$ \\
        HILP AWR temperature & $10$ \\
        OPAL VAE MLP dimensions & $(256, 256)$ \\
        OPAL VAE \# GRU layers & $2$ \\
        OPAL VAE KL coefficient & $0.1$ \\
        High-level action length $k$ & $10$ \\
        High-level IQL discount factor & $0.99$ \\
        High-level IQL expectile & $0.9$ (AntMaze), $0.7$ (Kitchen) \\
        High-level AWR temperature & $1$ \\
        High-level value normalization & None (AntMaze), LayerNorm~\citep{ln_ba2016} (Kitchen) \\
        \bottomrule
    \end{tabular}
    \end{center}
\end{table}

\end{document}